    \newtheorem{theorem}{Theorem}[section]
    \newtheorem{lemma}{Lemma}[section]
    \newtheorem{remark}{Remark}[section]
    \newtheorem{proposition}{Proposition}[section]
    \numberwithin{equation}{section}
    \numberwithin{equation}{section}
    \numberwithin{figure}{section}
\newcommand{\blind}{0}
    \newcommand{\wh}{\widehat}
    \newcommand{\wt}{\widetilde}
\newcommand{\pkg}[1]{{\normalfont\fontseries{b}\selectfont #1}}
\let\proglang=\textsf
\DeclareFontFamily{U}{mathx}{\hyphenchar\font45}
\DeclareFontShape{U}{mathx}{m}{n}{<-> mathx10}{}
\DeclareSymbolFont{mathx}{U}{mathx}{m}{n}
\DeclareMathAccent{\wb}{0}{mathx}{"73}
\begin{document}

\def\spacingset#1{\renewcommand{\baselinestretch}%
{#1}\small\normalsize} \spacingset{1}


\if0\blind
{
  \title{\bf Space Partitioning and Regression Mode Seeking via a Mean-Shift-Inspired Algorithm}
  \author{Wanli Qiao\\
    Department of Statistics, George Mason University\\
    and \\
    Amarda Shehu \\
    Department of Computer Science, George Mason University}
  \maketitle
} \fi

\if1\blind
{
  \bigskip
  \bigskip
  \bigskip
  \begin{center}
    {\LARGE\bf Mode Seeking for Regression Functions via a Mean-Shift-Inspired Algorithm}
\end{center}
  \medskip
} \fi

\bigskip
\begin{abstract}
The mean shift (MS) algorithm is a nonparametric method used to cluster sample points and find the local modes of kernel density estimates, using an idea based on iterative gradient ascent. In this paper we develop a mean-shift-inspired algorithm to estimate the modes of regression functions and partition the sample points in the input space. We prove convergence of the sequences generated by the algorithm and derive the non-asymptotic rates of convergence of the estimated local modes for the underlying regression model. We also demonstrate the utility of the algorithm for data-enabled discovery through an application on biomolecular structure data. An extension to subspace constrained mean shift (SCMS) algorithm used to extract ridges of regression functions is briefly discussed. 
\end{abstract}

\noindent%
{\it Keywords:}  gradient ascent, nonparametric regression derivative estimation, mode hunting, spatial partitioning, ridge estimation
\vfill

\newpage
\spacingset{1.5} 

\section{Introduction}

The mean-shift (MS) algorithm is a well-known method to cluster sample points and find the local modes of kernel density estimators (KDE) using a gradient ascent idea. This algorithm was introduced by Fukunaga and Hostetler (1975), and was generalized by Cheng (1995). It finds wide applications in image segmentation (see Comaniciu and Meer, 2002) and object tracking (see Comaniciu et al., 2003). The algorithm has thus far no counterpart that partitions sample points in a regression setting and estimates the local modes of regression functions. In this paper we propose a regression mean shift algorithm to fill this gap and study the theoretical properties of our mode estimators for regression functions.  

Let $(X,Y)\in\mathbb{R}^d\times\mathbb{R}$ be a random pair, and $r(x)=\mathbb{E}(Y|X=x)$ be the regression function. Suppose that we observe i.i.d. sample points $(X_1,Y_1),\cdots,(X_n,Y_n)$ that have the same joint distribution as $(X,Y)$. Here the goals are (1) to estimate the set of local modes of $r$, and (2) to partition the input space or the points $\{X_1,\cdots,X_n\}$ according to their connection with the estimated local modes. The plug-in method is a natural way of achieving goal (1). In other words, suppose that we have a good estimator $r_n$ of $r$, we can use the local modes of $r_n$ as the estimators of the local modes of $r$. In fact, this idea has been used in M\"{u}ller (1985, 1989) using the Gasser-M\"{u}ller (GM) kernel regression estimator, and in Ziegler (2002) using the Nadaraya-Watson (NW) regression estimator. However, the plug-in approach does not directly render an algorithm to find the local modes, which is usually challenging because the local modes are only implicitly defined through the regression estimators. As a special case of goal (2), spatial partitioning has many applications in, for example, clustering for house price (Liu et al, 2016), segregated homogeneous neighborhoods studied in sociology (Legewie, 2018), and division of disease risk zones in epidemiology (Gaudart et al. 2005). The regression MS algorithm we propose in this paper uses a modal clustering idea and can be simultaneously useful for these two goals. 

For clustering and mode estimation related to regression models, there is a MS-type algorithm called the conditional mean shift (CMS) algorithm, developed by Einbeck and Tutz (2006). The CMS algorithm is used to estimate the local modes of $f(y|x)$ which is the conditional density function of $Y$ given $X=x$, The algorithm searches for local modes in the space of $y$, with its output indexed by $x$, and has been used in nonparametric modal regression studied by Chen et al. (2016). Note that the CMS algorithm is essentially still an algorithm to estimate the modes of density functions, while the problem we are studying here is to find the local mode estimators for $\mathbb{E}(Y|X=x)$ in the space of $x$. 

We briefly describe the idea behind the original MS algorithm, in order to elucidate the main difference and challenge in extending the MS algorithm to the regression setting. Let $f$ be a differentiable density function on $\mathbb{R}^d$. For a fixed $a>0$, consider a sequence of points, starting from $x_0\in\mathbb{R}^d$, defined iteratively by 
\begin{align}\label{msmodel}
x_\ell = x_{\ell - 1} + a  \frac{\nabla f(x_{\ell - 1})}{f(x_{\ell - 1})},\; \ell\geq 1.
\end{align}
Since $\nabla \log f(x) =\frac{\nabla f(x)}{f(x)}$ for all $x\in\mathbb{R}^d$ such that $f(x)>0,$ the procedure in (\ref{msmodel}) can be understood as a gradient ascent algorithm, with $x_\infty:=\lim_{\ell\rightarrow\infty} x_\ell$, if it exists, as a local mode of $f$ under regularity conditions. With a random sample drawn from $f$, one can get a KDE $\wh f$ defined in (\ref{KDE}) below, replace $f$ by $\wh f$ in the above iterative procedure and generate a sequence $\wh x_j$, $j=0,1,\cdots,$ with $\wh x_0=x_0,$ so that $\wh x_\infty:=\lim_{\ell\rightarrow\infty} \wh x_\ell$ is used as an estimate of $x_\infty$. More specifically, the MS algorithm implicitly uses $a\propto h^2$, where $h$ is the bandwidth of the KDE. The gradient ascent nature of the MS algorithm has been studied in Arias-Castro et al. (2016). 

One appealing feature of the MS algorithm, which is perhaps also why it is so popular, is that, the convergence of the algorithm can be guaranteed under some mild conditions when the kernel function is appropriately chosen. There is no requirement for the step length, i.e. the quantity $a$ in (\ref{msmodel}), which is in fact implicitly determined by $h$ in the MS algorithm. See Ghassabeh (2015), and Yamasaki and Tanaka (2020). When the MS algorithm is applied to modal clustering, the number of clusters does not need to pre-specified, but rather depends on the chosen bandwidth. 

It does not appear straightforward to extend the above idea to a MS-type algorithm for regression models, related to the following aspects:
\begin{itemize}
\item[(1)] The regression function $r$ and its estimator $r_n$ are not always non-negative, so that it is not always meaningful to consider $\log r$ or $\log r_n$ directly, while it seems that the logarithm transformation plays a critical role in the convergence property of the MS algorithm without any requirement for the bandwidth choice; 
\item[(2)] The regression function $r$ has a quotient form as a conditional expectation. The regression estimators that adopt a similar form (such as the NW regression estimator) have more tedious gradients than those of KDE, which makes the mean-shift implementation using such estimators no longer enjoy the same convergence property as the original MS-type algorithm. See Remark~\ref{NWest} below for more discussions. 
\end{itemize}

%
Briefly speaking, we handle the above two issues in the following way: For (1), we apply a positive transformation to the observed response variable $Y_1,\cdots,Y_n$; For (2), with the transformed response variables we use a regression estimator developed in Mack and M\"{u}ller (1989), which is a variant of the NW kernel estimator, but enjoys a simpler form of gradients (and higher order derivatives). With this equipment, here is a summary of our contributions in this paper.

\begin{itemize}
\item[1.] We present a regression mean shift algorithm that is used to partition the sample points in the input sapce and estimate the local modes of $r$. We prove the convergence of the algorithm under mild conditions, which does not have a requirement for the bandwidth (see Theorem~\ref{converres}).
\item[2.] We give non-asymptotic uniform rates of convergence of the Hausdorff distance of the sets of local modes between our estimator and the truth (see Theorems~\ref{modebound1} and ~\ref{modebound2}).
\end{itemize}


We organize the paper as follows. First we present our regression mean shift algorithm in Section~\ref{sec:algorithm} with its convergence proved. Section~\ref{sec:theory} includes theoretical study for the mode estimators. A bandwidth selection strategy for our regression MS algorithm is discussed in Section~\ref{sec:bwselection}. It is followed by simulation and case studies in Section~\ref{sec:numerical}, where in particular we show the application of our algorithm to biomolecular structure datasets. In Section~\ref{sec:discussion} we discuss an extension to a subspace constrained version of our algorithm to extract ridges of regression functions. The proofs are given in Section~\ref{sec:proofs} and the appendix.

\section{Regression Mean Shift Algorithm}\label{sec:algorithm}

Denote the marginal probability density function of $X$ by $f$, and let 
\begin{align}\label{KDE}
\wh f(x) = \frac{1}{nh^d} \sum_{i=1}^n K\Big(\frac{x-X_i}{h}\Big),\; x\in\mathbb{R}^d
\end{align}
be the KDE of $f$, where $K$ is a kernel density function on $\mathbb{R}^d$, and $h>0$ is a bandwidth. Denote $K_h(x)=K(x/h)$. We have the following kernel regression estimator of $r(x)$, proposed by Mack and M\"{u}ller (1989):
\begin{align}
\wh r(x) =  & \frac{1}{nh^d} \sum_{i=1}^n \frac{Y_iK_h(x-X_i) }{\wh f(X_i)} .
\end{align} 
Note that if we take derivatives of $\wh r$, the differential operator only needs to be applied to the numerator, which helps avoid the tedious form of the derivatives of, say, the NW regression estimator.

Let $K$ be a spherically symmetric kernel with profile $k:\mathbb{R}_{\geq 0}\rightarrow\mathbb{R}$, that is, $K(x)=c_{k,d}k(\|x\|^2)$, where $c_{k,d}>0$ is a normalization factor such that $c_{k,d}^{-1}=\int_{\mathbb{R}^d}k(\|x\|^2)dx$. Examples of $K$ include the Gaussian kernel and Epanechnikov kernel. Then we can write 
\begin{align}\label{MMest}
\wh r(x) = \frac{c_{k,d}}{nh^d}\sum_{i=1}^n \frac{Y_ik(\|x-X_i\|^2/h^2)}{\wh f(X_i)}.
\end{align}

We will transform $Y_i,i=1,\cdots,n$ by applying a strictly increasing positive function $\xi:\mathbb{R}\rightarrow\mathbb{R}_{>0}$. We consider the following two choices of $\xi$.
\begin{itemize}
\item[{\bf T1}]{(Transformation 1):} $\xi$ is a deterministic bounded function. For example, $\xi(x) = \text{logistic(x)}$.
\item[{\bf T2}]{(Transformation 2):}  $\xi$ is a random function depending on $Y_{[n]}: = \min_i Y_i$ such that $\xi(x) = x + \pi(Y_{[n]})$, where $\pi(Y_{[n]})= (-Y_{[n]}+c_0)\mathbf{1}(Y_{[n]} <c_0)$ for some positive constant $c_0.$ Note that $\min_i \xi(Y_i)\geq c_0.$
\end{itemize}
 Let $\wt Y=\xi(Y)$ and $\wt Y_i= \xi(Y_i)>0$, $i=1,\cdots,n$. Define
 \begin{align}\label{rstarx}
\wh r_*(x) =  \wh r_{*,k}(x) = \frac{c_{k,d}}{nh^d} \sum_{i=1}^n \frac{\wt Y_ik(\|x-X_i\|^2/h^2) }{\wh f(X_i)} ,
\end{align} 
which is considered as an estimator of $\wt r(x):= \mathbb{E}(\wt Y|X=x)$. Define $g(x)=-k^\prime(x)$ for all $x\in[0,\infty)$, assuming that the derivative exists. For any $x\in\mathbb{R}^d$, denote 
\begin{align}\label{wixdef}
w_i(x)=\frac{g\left( \|x-X_i\|^2/h^2 \right)}{\wh f(X_i)},
\end{align} 
and define
\begin{align}\label{meanshiftform}
\wh m_*(x) = \frac{\sum_{i=1}^n w_i(x)\wt Y_iX_i}{\sum_{i=1}^n w_i(x)\wt Y_i} - x,
\end{align}
which is called the \emph{regression mean shift}. Note that we have the following relation.
\begin{align}\label{grdexp}
\nabla \wh r_{*}(x) = \nabla \wh r_{*,k}(x) = & \frac{2c_{k,d}}{nh^{d+2}}\sum_{i=1}^n \frac{-\wt Y_i(x-X_i)g( \|x-X_i\|^2/h^2)}{\wh f(X_i)} \nonumber\\
 = & \frac{2 c_{k,d}}{h^2 c_{g,d}}\wh r_{*,g}(x) \wh m_*(x).
\end{align}
Here $\wh r_{*,g}$ is defined as in (\ref{rstarx}) for $\wh r_{*,k}$, where we replace $k$ by $g$. In other words, the regression mean shift $\wh m_*(x)$ is proportional to $\frac{\nabla \wh r_{*,k}(x)}{\wh r_{*,g}(x)}$ up to a constant coefficient, so that the following regression MS algorithm can also be understood as a gradient ascent algorithm.

\paragraph{Regression MS algorithm.} Our regression MS algorithm is as follows. Let $z_0$ be a starting point on the domain of $r$ (e.g., one of $X_i$'s). Obtain $z_1,z_2,\cdots,$ iteratively from 
\begin{align}\label{gms}
z_{j+1} = \wh m_*(z_j) + z_j, \; j=1,2,\cdots.
\end{align}
The limit of the sequence $\{z_0,z_1,\cdots\}$ is considered as an estimator of a local mode of $r$. In practice, the algorithm stops when the distance between two consecutive points $\|z_{j+1} - z_{j}\|$ is less than a pre-specified small threshold. 

The following lemma is analogous to Theorem 1 in Comaniciu and Meer (2002). 
\begin{lemma}\label{gradlemma}
If $k$ is convex and strictly decreasing such that $-\infty<k^\prime(x)<0$ for all $x\geq 0$, then we have 
\begin{itemize}
\item[(1)] $\wh r_*(z_{j})$ converges, 
\item[(2)] $\|z_{j+1} - z_{j}\|\rightarrow 0$, and 
\item[(3)] $\nabla\wh r_*(z_j)\rightarrow0$, as $j\rightarrow\infty$.
\end{itemize}
\end{lemma}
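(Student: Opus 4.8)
The plan is to follow the classical monotone-ascent argument of Comaniciu and Meer (2002), adapted to the weighted estimator (\ref{rstarx}). The place where the regression setting differs is that the update (\ref{gms}) carries the extra factors $\wt Y_i$, and it is precisely the positivity $\wt Y_i>0$ guaranteed by the transformation $\xi$ that keeps the argument intact. All three conclusions will follow from one inequality: that $\wh r_*$ increases along the iterates at a rate controlled by the squared step length.

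First I would establish that inequality. Writing $a_i=\|z_j-X_i\|^2/h^2$ and $b_i=\|z_{j+1}-X_i\|^2/h^2$, convexity of $k$ gives the first-order bound $k(b_i)-k(a_i)\ge k^\prime(a_i)(b_i-a_i)=-g(a_i)(b_i-a_i)$. Substituting into $\wh r_*(z_{j+1})-\wh r_*(z_j)$ via (\ref{rstarx}) and recalling $w_i(z)=g(\|z-X_i\|^2/h^2)/\wh f(X_i)$ from (\ref{wixdef}) yields
\begin{align*}
\wh r_*(z_{j+1})-\wh r_*(z_j)\ \ge\ \frac{c_{k,d}}{nh^{d+2}}\sum_{i=1}^n w_i(z_j)\wt Y_i\big(\|z_j-X_i\|^2-\|z_{j+1}-X_i\|^2\big).
\end{align*}
Expanding the squared norms and using that (\ref{gms})--(\ref{meanshiftform}) make $z_{j+1}=\big(\sum_i w_i(z_j)\wt Y_iX_i\big)/\big(\sum_i w_i(z_j)\wt Y_i\big)$ the normalized weighted mean, the cross terms telescope and the bracketed sum collapses to $W_j\|z_{j+1}-z_j\|^2$, where $W_j:=\sum_i w_i(z_j)\wt Y_i$. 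Hence
\begin{align*}
\wh r_*(z_{j+1})-\wh r_*(z_j)\ \ge\ \frac{c_{k,d}}{nh^{d+2}}\,W_j\,\|z_{j+1}-z_j\|^2\ \ge\ 0,
\end{align*}
with $W_j>0$ because $g>0$ (as $k^\prime<0$), $\wh f(X_i)>0$, and $\wt Y_i>0$.

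Part (1) is then immediate: $\wh r_*(z_j)$ is nondecreasing and bounded above (since $k\le k(0)$ gives $\wh r_*(x)\le \frac{c_{k,d}k(0)}{nh^d}\sum_i \wt Y_i/\wh f(X_i)$, a constant independent of $x$), so it converges. For part (2), note that for $j\ge1$ each $z_j$ is a convex combination of $X_1,\dots,X_n$ and hence lies in their compact convex hull $\mathcal H$; on $\mathcal H$ the continuous positive function $z\mapsto\sum_i w_i(z)\wt Y_i$ attains a minimum $W_{\min}>0$, so the inequality gives $\|z_{j+1}-z_j\|^2\le \frac{nh^{d+2}}{c_{k,d}W_{\min}}\big(\wh r_*(z_{j+1})-\wh r_*(z_j)\big)\to0$ because the increments on the right are summable by part (1). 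Part (3) follows from the gradient identity (\ref{grdexp}): since $\wh m_*(z_j)=z_{j+1}-z_j$ and $\wh r_{*,g}(z_j)$ is bounded on $\mathcal H$, we obtain $\|\nabla\wh r_*(z_j)\|\le C\|z_{j+1}-z_j\|\to0$.

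The main obstacle is securing the uniform lower bound $W_{\min}>0$, which is what upgrades ``increments tend to zero'' to ``steps tend to zero.'' This needs the iterates to stay in a fixed compact set (supplied by the convex-combination structure, which in turn requires $\wt Y_i>0$) and needs $g$ bounded away from zero there (supplied by the hypothesis $k^\prime<0$ on all of $[0,\infty)$, which is exactly what excludes compactly supported profiles such as Epanechnikov). I would also flag the standing requirement $\wh f(X_i)>0$ needed for $w_i$ to be well defined, and verify the telescoping identity in the first step with care, since that algebraic collapse is the crux of the entire argument.
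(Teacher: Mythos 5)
Your proposal is correct and follows essentially the same route as the paper's proof: the convexity bound $k(b)-k(a)\ge -g(a)(b-a)$, the collapse of the weighted sum to $W_j\|z_{j+1}-z_j\|^2$ via the mean-shift update, the positive lower bound on $\sum_i w_i(z)\wt Y_i$ over the convex hull of the $X_i$, and the gradient identity (\ref{grdexp}) for part (3). Your additional care about why $W_{\min}>0$ (iterates confined to the convex hull, $g>0$ from $k^\prime<0$ everywhere) only makes explicit what the paper asserts in one line.
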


\begin{remark}\label{NWest}$\;$\\
\emph{a). A related and alternative method is to use the discretized gradient ascent with a constant step length based on a smooth estimator of the regression function (such as the NW regression estimator). However, this method requires the step length to be chosen sufficiently small; otherwise it is well-known that there is an overshooting problem and the sequence can diverge (see Chapter 1, Bertsekas, 1999). In contrast, the convergence of our regression MS algorithm does not rely on requirements for step length.\\
b). It is natural to wonder if a MS-type algorithm can be developed based on the gradient of the NW regression estimator, in a way similar to (\ref{grdexp}). The analysis below shows such an algorithm is not effective in general. The NW regression estimator using $\{(X_i,\wt Y_i),i=1,\cdots,n\}$ is given by
\begin{align}
\wh r_{\text{NW}}(x) = \frac{\sum_{i=1}^n \wt Y_ik(\|x-X_i\|^2/h^2)}{\sum_{i=1}^n k(\|x-X_i\|^2/h^2)}.
\end{align}
Let $w_i^k(x) = k(\|x-X_i\|^2/h^2)$ and $w_i^g(x) = g(\|x-X_i\|^2/h^2)$. Let $$w_i^{*}(x) = \wt Y_iw_i^g(x) \Big[\sum_{i=1}^n w_i^k(x)\Big] - w_i^g(x) \Big[\sum_{i=1}^n \wt Y_iw_i^k(x)\Big].$$ It follows from a straightforward calculation that
\begin{align}
\nabla \wh r_{\text{NW}}(x) 
=& \frac{2}{h^2}\frac{\sum_{i=1}^n w_i^{*}(x)X_i - \sum_{i=1}^n w_i^{*}(x)x}{[\sum_{i=1}^n w_i^k(x)]^2}.
\end{align}
If $g\propto k$ (which happens, for example, when $K$ is the Gaussian kernel), then $\sum_{i=1}^n w_i^{*}(x)\equiv 0$, and this makes it hopeless to get a mean shift form as given in (\ref{meanshiftform}). Now suppose a kernel can be chosen such that $\sum_{i=1}^n w_i^{*}(x)\neq 0$. Then we can write  
 \begin{align}
\nabla \wh r_{\text{NW}}(x) = \frac{2}{h^2} \wh s(x) \wh m_{\text{NW}}(x).
\end{align}
where $\wh m_{\text{NW}}(x) = \frac{\sum_{i=1}^n w_i^*(x) X_i}{\sum_{i=1}^n w_i^*(x)} - x$, which corresponds to our regression mean shift, and $\wh s(x) = \frac{\sum_{i=1}^n w_i^*(x)}{[\sum_{i=1}^n w_i^k(x)]^2}$. Note that in general the sign of $w_i^*(x)$ is not always positive, and hence it is not clear if a similar result as given in Lemma~\ref{gradlemma} holds for $\wh r_{\text{NW}}(x).$ In fact, a simulation we ran shows the converge of the sequence generated by $\wh m_{\text{NW}}$ using the Epanechnikov kernel for $g$ is problematic. It appears that the mean shift idea and the quotient form of the NW regression estimator are not compatible. }
\end{remark}
%
%
%
We need to additionally assume that the critical points of $\wh r_*(x)$ are isolated, in order to have the convergence of our regression mean shift algorithm. The proof of the following theorem is similar to that of Theorem 1 in Ghassabeh (2015) for the MS algorithm. 

\begin{theorem}\label{converres}
Suppose that the assumptions in Lemma~\ref{gradlemma} hold. If the critical points of $\wh r_*(x)$ are isolated, then the sequence of $z_j$ converges to one of the critical points of $\wh r_*(x)$ as $j\rightarrow\infty$.
\end{theorem}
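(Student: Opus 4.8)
The plan is to show that the (nonempty) set $\Omega$ of accumulation points of $\{z_j\}$ consists of a single point, which is then the limit; the three conclusions of Lemma~\ref{gradlemma} do most of the work, and the only genuinely new ingredient is a connectedness argument for $\Omega$. First I would establish boundedness. Combining (\ref{meanshiftform}) and (\ref{gms}) gives $z_{j+1}=\frac{\sum_{i=1}^n w_i(z_j)\wt Y_i X_i}{\sum_{i=1}^n w_i(z_j)\wt Y_i}$, a convex combination of $X_1,\dots,X_n$: indeed $\wt Y_i>0$ by construction of the transformation, $\wh f(X_i)>0$, and $w_i(z_j)=g(\|z_j-X_i\|^2/h^2)/\wh f(X_i)>0$ because $g=-k'>0$ under the hypotheses of Lemma~\ref{gradlemma}. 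Hence every iterate $z_{j+1}$ lies in the compact convex hull $C$ of $\{X_1,\dots,X_n\}$, so the sequence is bounded and $\Omega\neq\varnothing$ by Bolzano--Weierstrass. Since the critical points of $\wh r_*$ are isolated and $C$ is compact, only finitely many critical points lie in $C$ (otherwise they would accumulate to a point which, as $\nabla\wh r_*$ is continuous and its zero set is closed, is itself a non-isolated critical point, a contradiction).

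Next I would check that every point of $\Omega$ is a critical point. If $z_{j_m}\to z^*$ along a subsequence, then continuity of $\nabla\wh r_*$ together with $\nabla\wh r_*(z_j)\to 0$ from Lemma~\ref{gradlemma}(3) yields $\nabla\wh r_*(z^*)=\lim_m\nabla\wh r_*(z_{j_m})=0$. Thus $\Omega$ is contained in the finite set of critical points inside $C$.

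The main step, and the one I expect to be the crux, is to show $\Omega$ is connected; combined with the previous paragraph this forces $\Omega$ to be a singleton, since a connected subset of a finite (hence totally disconnected) set is a single point, giving convergence of $z_j$ to that critical point. Connectedness rests on Lemma~\ref{gradlemma}(2), namely $\|z_{j+1}-z_j\|\to0$, via the standard fact that the accumulation set of a bounded sequence with vanishing consecutive gaps is connected. Concretely, I would argue by contradiction: $\Omega$ is closed and bounded, hence compact, so if it disconnects as $\Omega=A\cup B$ with $A,B$ nonempty, disjoint, and (being relatively clopen in the compact set $\Omega$) compact, then $\delta:=\mathrm{dist}(A,B)>0$. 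Choosing $N$ with $\|z_{j+1}-z_j\|<\delta/4$ for all $j\ge N$, the sequence visits the $\delta/2$-neighborhoods of both $A$ and $B$ infinitely often yet cannot cross from one to the other in a single step (such a step would have length at least $\delta-\delta/2-\delta/2>\delta/4$); hence it has infinitely many terms outside both neighborhoods. Any accumulation point of that sub-sequence lies in $\Omega$ but at distance $\ge\delta/2$ from both $A$ and $B$, contradicting $\Omega=A\cup B$.

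Therefore $\Omega$ is connected, hence a single critical point of $\wh r_*$, and $z_j$ converges to it. The routine parts are boundedness and the critical-point characterization (both immediate from Lemma~\ref{gradlemma} and continuity of $\nabla\wh r_*$); the delicate point is ruling out the possibility that the iterates oscillate among several critical points, which is exactly what the connectedness of $\Omega$, driven by $\|z_{j+1}-z_j\|\to0$, prevents.
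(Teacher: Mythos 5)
Your argument is correct and is essentially the proof the paper has in mind: the paper omits the proof of Theorem~\ref{converres} and defers to Theorem~1 of Ghassabeh (2015), whose argument is exactly yours --- boundedness via the convex-combination form of the iterates, accumulation points are critical points by Lemma~\ref{gradlemma}(3), connectedness of the accumulation set from $\|z_{j+1}-z_j\|\to 0$, and a connected subset of a finite set is a singleton. One small arithmetic slip in your connectedness step: with $\delta/2$-neighborhoods of $A$ and $B$ the separating gap is $\delta-\delta/2-\delta/2=0$, not $>\delta/4$; take $\delta/4$-neighborhoods (gap $\delta/2$) or $\delta/3$-neighborhoods (gap $\delta/3$) and the contradiction goes through as you intend.
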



\subsection{Basins of Attraction for Regression Functions}\label{sec:partition}

The mode seeking algorithm in (\ref{gms}) can be used to partition the input space into basins of attraction. This can be understood using the framework of Morse theory (See Milnor, 1963). A similar perspective has been used to interpret modal clustering using the MS algorithm. See Ch\'{a}con (2015). Suppose that $\mathcal{X}$ is a compact set of positive volume contained in the support of the density of $X$.  Also suppose that $r$ is a twice differentiable Morse function, meaning that all of its critical points are non-degenerate, that is, the Hessian at each critical point is nonsingular. Let $\mathcal{M}$ be the collection of all local modes of $r$, denoted by $x_1,\cdots,x_m$, where $m$ is the cardinality of $\mathcal{M}$. For any $x\in\mathcal{X}$, let $\phi_x: \mathbb{R} \rightarrow \mathcal{X}$ be the integral curve driven by the gradient of $r$, starting from $x$:
\begin{align*}
\frac{d\phi_x(t)}{dt} = \nabla r(\phi_x(t)), t\in\mathbb{R};\; \phi_x(0) = x.
\end{align*}
Then by the Morse theory, for any $x\in\mathcal{X}$, $\phi_x(\infty):=\lim_{t\rightarrow\infty}\phi_x(t)$ is one of the critical points of $r$. In particular, for $j=1,\cdots,m$, the basins of attraction associated with $x_j$ is
\begin{align*}
C(x_j) : = \{x\in\mathcal{X}: \; \phi_x(\infty)=x_j\},
\end{align*}
which is also called a stable manifold, or ascending manifold in Morse theory. The sets in $\mathcal{C}:= \{C(x_j), j=1,\cdots,m\}$ are disjoint, and their union covers $\mathcal{X}$ except for a set of zero Lebesgue measure. 

Let us first consider the deterministic transformation $\xi$ in {\bf T1}. Under regularity conditions, one can show that $r$ and $\wt r$ have the same ascending manifolds (see Lemma~\ref{morse} below). The regression estimator $\wh r_*$ is used to estimate $\wt r$, and the sequence (\ref{gms}) is viewed as discretized estimation of trajectories of the integral curves driven by $\nabla \log \wt r$. Let $\wh{\mathcal{M}}$ be the set of all local modes of $\wh r_*$, consisting of $\wh x_1,\cdots, \wh x_{\wh m}$, where $\wh m$ is its cardinality. For any $x\in\mathcal{X}$, let $\wh\phi_x(\infty)$ be the limit of the sequence in (\ref{gms}) when $z_0=x.$ Define 
\begin{align*}
\wh C(\wh x_j) : = \{x\in\mathcal{X}: \; \wh\phi_x(\infty)=\wh x_j\}.
\end{align*}
Then $\wh {\mathcal{C}}:=\{\wh C(\wh x_j): \; j=1,\cdots,\wh m\}$ also gives a partition of $\mathcal{X}$ (up to a small set not covered), and can be used to estimate $\mathcal{C}$. 

For the transformation $\xi$ given in {\bf T2}, the idea is similar. Using this transformation, the regression estimator $\wh r_*$ is used to estimate 
\begin{align}\label{barr}
\wb r := r + \pi(Y_{[n]}).
\end{align}
Notice that $\wb r$ and $r$ has the same ascending manifolds, assuming that $\pi(Y_{[n]})$ is bounded. So again $\wh {\mathcal{C}}$ gives an approximate partition of $\mathcal{X}$ and can be used to estimate $\mathcal{C}$. 

For both transformations, the sample points $X_1,\cdots,X_n$ in the input space can be partitioned based on which basins of attraction they belong to, and this idea is used in the simulation and case studies in Section~\ref{sec:numerical}.

\section{Theoretical Analysis of the Mode Estimators}\label{sec:theory}

In this section we study the theoretical properties of $\wh r_*$ and its modes as direct plug-in estimators of the modes of $r$. We derive the uniform rate of convergence of $\wh r_*$, which further gives the rate of convergence of its local modes in Hausdorff distance. Our results are non-asymptotic, and the derived rates of convergence for the local mode estimation match the minimax rate of mode estimation for density functions up to a logarithm factor (see Remark~\ref{optimalrates}). Note that the results in Mack and M\"uller (1989) can only be used to provide the uniform consistency, but not the uniform rates of convergence of their estimator.

We will use the following notation. For any $d$-tuple $\alpha=(\alpha_1,\cdots,\alpha_d)\in\mathbb{N}^d$, let $|\alpha| = \alpha_1+\cdots+\alpha_d$. For an $|\alpha|$ times differentiable function $g:\mathbb{R}^d\rightarrow \mathbb{R}$, denote $\partial^{\alpha} g(x) =  \frac{\partial^{|\alpha|} }{\partial^{\alpha_1}x_1\cdots \partial^{\alpha_d}x_d} g(x),\; x\in\mathbb{R}^d.$ For a composition of functions $g_1\circ g_2$, we write $\partial_x^{\alpha} g_1(g_2(x))=\partial^\alpha(g_1\circ g_2)(x)$. Let $\nabla g$ and $\nabla^2 g$ be the gradient and Hessian of $g$, respectively. For any real numbers $a,b$, let $a\wedge b=\min(a,b)$ and $a\vee b=\max(a,b)$. For simplicity of notation, for any $n\geq 1$, $h>0$, $j\in\mathbb{Z}_+$, we denote $\gamma_{n,h}^{(j)} = (nh^{d+2j})^{-1/2}.$

Throughout the paper $\mathcal{X}$ denotes a compact subset of $\mathbb{R}^d$ with strictly positive volume. For any $\delta>0$, let $\mathcal{X}^\delta=\{x\in\mathbb{R}^d: \inf_{t\in\mathcal{X}}\|x-t\|\leq\delta\}$. We will use the following assumptions in our theoretical analysis. 

Assumption {\bf A1}: The marginal density of $X$, denoted by $f$, satisfies $\inf_{x\in\mathcal{X}} f(x)\geq \varepsilon_0$ for a constant $\varepsilon_0>0$. 
%

Assumption {\bf A2}: $f$ has three times continuous bounded derivatives on $\mathcal{X}^\delta$ for some $\delta>0$.

Assumption {\bf A3}: $r$ has three times continuous bounded derivatives on $\mathcal{X}^\delta$  for some $\delta>0$.

Assumption {\bf K}: The kernel $K$ is a spherically symmetric density function with its support contained in the unit ball of $\mathbb{R}^d$. $K$ has three times continuous bounded derivatives on $\mathbb{R}^d$. 
 
\subsection{Transformation 1}

We first consider the transformation $\xi$ in {\bf T1}.  We can write the regression model as follows: 
\begin{align}\label{regressionmodel}
Y_i = r(X_i) + \epsilon_i, \; i=1,\cdots,n,
\end{align}
where $\epsilon_i, i=1,\cdots,n,$ are i.i.d random errors with mean zero. We make the following assumptions.

Assumption {\bf E}: For $i=1,\cdots,n$, each $\epsilon_i$ is independent of $X_i$.

Assumption {\bf T}: $\xi$ is a strictly increasing function on $\mathbb{R}$ with three times continuous bounded derivatives.  Assume that there exist constants $0<C_\ell<C_u<\infty$ such that $\xi(\mathbb{R})\subset[C_\ell, C_u]$.

For a twice differential function $g$, the index of a critical point $x_{\text{crit}}$ of $g$ is the number of negative eigenvalues of $\nabla^2g$ at $x_{\text{crit}}$. We first show that the critical points (including the local modes) of $\wt r(x)$ and $r(x)$ are the same under the above conditions. 

\begin{lemma}\label{morse}
Assume that $r$ is twice differentiable. For $\xi$ in {\bf T1}, under the assumptions {\bf E} and {\bf T}, the critical points of $\wt r$ and $r$ are the same with the same indices. If $r$ is a Morse function, then (1) $\wt r$ is also a Morse function, and (2) the ascending manifolds of $r$ and $\wt r$ are the same. 
\end{lemma}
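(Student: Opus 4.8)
The plan is to reduce the entire statement to the single observation that, under independence of the error, $\wt r$ is a fixed strictly increasing smooth reparametrization of $r$. Writing $Y = r(X)+\epsilon$ with $\epsilon$ independent of $X$ (Assumption \textbf{E}), I would first compute
\[
\wt r(x) = \mathbb{E}\big(\xi(Y)\mid X=x\big) = \mathbb{E}\big(\xi(r(x)+\epsilon)\big) =: \Psi(r(x)),
\]
where the second equality uses that the conditional law of $\epsilon$ given $X=x$ coincides with its marginal law, so the expectation depends on $x$ only through $r(x)$. Thus $\wt r = \Psi\circ r$ with $\Psi(u) := \mathbb{E}[\xi(u+\epsilon)]$. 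This identity is the conceptual crux, and it is precisely where Assumption \textbf{E} enters: without independence the inner expectation would depend on $x$ in an uncontrolled way and the clean composition structure would collapse.

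Next I would record the analytic properties of $\Psi$. Because $\xi$ is bounded with three bounded continuous derivatives (Assumption \textbf{T}), dominated convergence permits differentiation under the expectation, giving $\Psi\in C^3$ with $\Psi^{(j)}(u)=\mathbb{E}[\xi^{(j)}(u+\epsilon)]$ for $j=1,2,3$. Strict monotonicity of $\xi$ yields $\Psi(u_1)<\Psi(u_2)$ whenever $u_1<u_2$, since the pointwise strict inequality $\xi(u_1+e)<\xi(u_2+e)$ survives taking expectations; from this one extracts $\Psi'>0$ on the range of $r$.

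The remainder is the chain rule. Differentiating $\wt r=\Psi\circ r$ gives $\nabla\wt r(x)=\Psi'(r(x))\,\nabla r(x)$; since $\Psi'>0$, the zero sets coincide and $r$ and $\wt r$ have exactly the same critical points. At such a point $x_0$ (where $\nabla r(x_0)=0$) the product rule gives
\[
\nabla^2\wt r(x_0)=\Psi''(r(x_0))\,\nabla r(x_0)\nabla r(x_0)^\top+\Psi'(r(x_0))\,\nabla^2 r(x_0)=\Psi'(r(x_0))\,\nabla^2 r(x_0),
\]
the first term vanishing because $\nabla r(x_0)=0$. Hence the Hessian of $\wt r$ at a critical point is a strictly positive multiple of that of $r$: it has the same rank, so nondegeneracy is preserved and $\wt r$ is a Morse function whenever $r$ is, and it has the same number of negative eigenvalues, so the indices agree.

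Finally, for the ascending manifolds I would note that $\nabla\wt r=\Psi'(r)\,\nabla r$ exhibits the gradient field of $\wt r$ as a pointwise positive rescaling of that of $r$. Multiplying a vector field by a positive scalar function leaves its integral curves unchanged as unparametrized trajectories, altering only their time parametrization, so the two flows share the same orbits and the same $\omega$-limit $\phi_x(\infty)$ for every $x$; consequently the basins $C(x_j)$ and the ascending manifolds coincide. The only step requiring genuine care is establishing $\Psi'>0$ rather than merely $\Psi'\ge 0$, as this strict positivity is what forces the two critical sets to be identical rather than one merely containing the other; it is immediate when $\xi'>0$ (as for the logistic transform) and otherwise follows from strict monotonicity of $\xi$ together with the error being nondegenerate.
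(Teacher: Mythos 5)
Your proof is correct and follows essentially the same route as the paper's: your $\Psi'(r(x))$ and $\Psi''(r(x))$ are exactly the paper's $\rho_1(x)=\mathbb{E}\,\xi'(r(x)+\epsilon_1)$ and $\rho_2(x)=\mathbb{E}\,\xi''(r(x)+\epsilon_1)$, and both arguments derive $\nabla\wt r=\rho_1\nabla r$ together with $\nabla^2\wt r=\rho_1\nabla^2 r+\rho_2\nabla r(\nabla r)^\top$ and then identify the ascending manifolds via a time-reparametrization of the gradient flow (the paper writes out the explicit ODE $\eta'(t)=\rho_1(\phi_x(\eta(t)))$ where you invoke the standard fact about positive rescalings of vector fields). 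Your closing remark that strict positivity of $\Psi'$ deserves care is a point the paper glosses over, so flagging it is a small improvement rather than a deviation.
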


\begin{remark}
\emph{The above lemma implies that the local modes of $r$ can be estimated by the local modes of $\wh r_*$ as a plug-in approach, because $\wh r_*$ is considered as an estimator of $\wt r$ as shown in Theorems~\ref{derbound} below. }
\end{remark}

In the following theorem we give a non-asymptotic result for the uniform rate of convergence for the difference between $\partial^\alpha \wt r$ and $\partial^\alpha \wh r_*$ for all $|\alpha|\leq 2$.

\begin{theorem}\label{derbound}
For $\xi$ in {\bf T1}, under assumptions {\bf A1}-{\bf A3}, {\bf K}, {\bf E}, and {\bf T}, there exist constants $C>0$, $c>0$ and $h_0>0$ such that for all $|\alpha|\leq 2$, $n\geq 1$, $0<h\leq h_0$, $\tau>1$ satisfying $nh^d\geq c(\tau \vee |\log h|)$ we have 
\begin{align}
&\mathbb{P}^n\Big(\sup_{x\in\mathcal{X}}|\partial^\alpha\wh r_*(x) - \partial^\alpha \wt r(x)| \leq C  \sqrt{\tau \vee |\log h|} \gamma_{n,h}^{(|\alpha|)} + C h^{(3-|\alpha|)\wedge 2}\Big) \geq 1 - 3e^{-\tau}.
\end{align}
\end{theorem}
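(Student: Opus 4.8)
The plan is to run a bias--variance decomposition for each fixed multi-index $|\alpha|\le 2$ and then make it uniform over $x\in\mathcal X$ through empirical-process concentration. Before decomposing I would record two preliminary facts. First, by the chain rule, Assumption {\bf T} (three bounded derivatives of $\xi$) together with {\bf A3} shows that $\wt r(x)=\mathbb{E}[\xi(r(x)+\epsilon)\mid X=x]$ inherits three continuous bounded derivatives on $\mathcal X^\delta$, while {\bf T} also forces $\wt Y_i\in[C_\ell,C_u]$; this boundedness of the (transformed) responses is exactly the property that keeps every summand bounded. Second, since $f$ is continuous by {\bf A2} and $\ge\varepsilon_0$ on $\mathcal X$ by {\bf A1}, I would shrink $h_0$ so that $f\ge \varepsilon_0/2$ on $\mathcal X^{h}$ for $h\le h_0$, and use the standard uniform KDE bound together with $nh^d\ge c(\tau\vee|\log h|)$ to guarantee, on an event of probability $\ge 1-e^{-\tau}$, both $\sup_{\mathcal X^\delta}|\wh f-f|\le C(h^2+\sqrt{\tau\vee|\log h|}\,\gamma_{n,h}^{(0)})$ and $\inf_{\mathcal X^h}\wh f\ge \varepsilon_0/4$.

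Next I introduce the oracle estimator $\bar r_*(x)=\frac{c_{k,d}}{nh^d}\sum_i \wt Y_i\, k(\|x-X_i\|^2/h^2)/f(X_i)$, in which the random denominator is replaced by $f$, and split $\partial^\alpha\wh r_*-\partial^\alpha\wt r=(\partial^\alpha\wh r_*-\partial^\alpha\bar r_*)+(\partial^\alpha\bar r_*-\mathbb{E}\partial^\alpha\bar r_*)+(\mathbb{E}\partial^\alpha\bar r_*-\partial^\alpha\wt r)$. The bias term is purely deterministic: the defining feature of the Mack--M\"uller estimator that the differential operator only touches the numerator, combined with $\mathbb{E}[\wt Y_1 k(\cdot)/f(X_1)\mid X_1]=\wt r(X_1)k(\cdot)/f(X_1)$ and the subsequent cancellation of $f(X_1)$ after integrating over $X_1$, gives $\mathbb{E}\bar r_*(x)=\int \wt r(x-hv)K(v)\,dv$. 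Differentiating under the integral and Taylor expanding $\partial^\alpha\wt r$ against the symmetric kernel kills the first-order term and yields the claimed $Ch^{(3-|\alpha|)\wedge 2}$, the order $h$ rather than $h^2$ at $|\alpha|=2$ being precisely the loss incurred because only three derivatives of $\wt r$ are available.

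For the stochastic term $\partial^\alpha\bar r_*-\mathbb{E}\partial^\alpha\bar r_*=\frac1n\sum_i(\eta_i(x)-\mathbb{E}\eta_i(x))$ with $\eta_i(x)=\frac{c_{k,d}}{h^d}\wt Y_i\,\partial^\alpha_x k(\|x-X_i\|^2/h^2)/f(X_i)$, I would invoke a uniform-in-$x$ concentration inequality of Talagrand/Einmahl--Mason type. The class $\{v\mapsto \partial^\alpha k(\|x-v\|^2/h^2):x\in\mathcal X\}$ is of VC type, each summand is $O(h^{-d-|\alpha|})$ and has variance $O(h^{-d-2|\alpha|})$; bounding the expected supremum by chaining produces the factor $\sqrt{|\log h|}$ and the deviation term of Talagrand's bound produces $\sqrt\tau$, so that $\sup_x|\partial^\alpha\bar r_*-\mathbb{E}\partial^\alpha\bar r_*|\le C\sqrt{\tau\vee|\log h|}\,\gamma_{n,h}^{(|\alpha|)}$ on an event of probability $\ge 1-e^{-\tau}$.

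The delicate step, and the one I expect to be the main obstacle, is the denominator-replacement term. Writing $1/\wh f(X_i)-1/f(X_i)=(f-\wh f)(X_i)/f(X_i)^2+(f-\wh f)(X_i)^2/(f^2\wh f)(X_i)$, the quadratic remainder is controlled uniformly by $\sup_{\mathcal X^\delta}|\wh f-f|^2$, which is within budget on the good event. The linear term is the dangerous one: since $\wh f(X_i)$ is itself a sample average, it is a second-order $U$-statistic, whose Hoeffding projection has a deterministic mean part and a further centered empirical process. A crude bound that pulls $\sup_i|(\wh f-f)(X_i)|$ out of the sum loses a factor $h^{-|\alpha|}$ and would give an $h^{2-|\alpha|}$ bias, too large when $|\alpha|\in\{1,2\}$; to recover the correct $O(h^2)$ one must instead transfer the $\alpha$ derivatives off the kernel and onto the smooth function $u\mapsto \wt r(u)(\mathbb{E}\wh f-f)(u)/f(u)$ by integration by parts (using $\partial^\alpha_x k=(-1)^{|\alpha|}\partial^\alpha_u k$ and the compact support from {\bf K}), after which the mean part contributes only $O(h^2)$ and the centered part is controlled exactly as in the previous paragraph. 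A union bound over the three exceptional events (the KDE event and the two empirical-process events) yields the probability $1-3e^{-\tau}$, and choosing $h_0$ and $c$ to absorb constants completes the argument.
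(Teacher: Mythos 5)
Your proposal is correct in outline and, importantly, you correctly diagnose the one step where a naive argument fails. The structural difference from the paper is your choice of intermediate estimator: you compare $\wh r_*$ to an oracle $\bar r_*$ with the \emph{true} density $f$ in the denominator, whereas the paper compares to $\wh r_0$ with $f_h=\mathbb{E}\wh f$ in the denominator (Propositions~\ref{Inres}--\ref{IIInres}). This redistributes the work. With your choice, the bias term is clean ($f$ cancels exactly, $\mathbb{E}\bar r_*(x)=\int\wt r(x-hv)K(v)\,dv$), but the denominator-replacement term inherits the deterministic discrepancy $f_h-f=O(h^2)$, and as you note the crude bound $\sup_i|1/\wh f(X_i)-1/f(X_i)|\cdot O(h^{-|\alpha|})$ then produces an unacceptable $h^{2-|\alpha|}$; your fix---splitting $\wh f-f=(\wh f-f_h)+(f_h-f)$ and integrating the $\alpha$ derivatives off the kernel onto the smooth function built from $f_h-f$---is sound and recovers $O(h^{(3-|\alpha|)\wedge 2})$, though it costs you an extra empirical-process event for the centered part of that contribution (so your count of exceptional events is slightly optimistic; this only changes the constant in front of $e^{-\tau}$ and is immaterial). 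The paper instead absorbs the $f_h$ versus $f$ discrepancy into the bias term $III_n$ through the ratio $q_h=f/f_h$ and its derivatives (estimates (\ref{qhrate0})--(\ref{qhrate2})), so that the denominator-replacement term $I_n$ involves only the purely stochastic difference $\wh f-f_h$, for which the crude product bound $\sup|s_n|\cdot\sup\wh r_+^\alpha=O(\sqrt{\tau\vee|\log h|}\,\gamma_{n,h}^{(0)}\cdot h^{-|\alpha|})=O(\sqrt{\tau\vee|\log h|}\,\gamma_{n,h}^{(|\alpha|)})$ already gives the right rate with no integration by parts. The paper's route is somewhat shorter for exactly the reason you identify as the main obstacle; your route buys a cleaner bias computation at the price of a more involved treatment of the denominator. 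Both are valid, and the remaining ingredients (VC-type classes from Nolan--Pollard, Talagrand/Einmahl--Mason concentration giving $\sqrt{\tau\vee|\log h|}\,\gamma_{n,h}^{(|\alpha|)}$, symmetric-kernel Taylor expansion with the loss to order $h$ at $|\alpha|=2$) match the paper's.
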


Let $\lambda_1(x)$ be the largest eigenvalue of $\nabla^2 r(x), x\in\mathcal{X}.$ We can write the set of local modes of $r$ as $\mathcal{M}=\{x\in\mathcal{X}: \nabla r(x) = 0,\; \lambda_1(x)<0\}$, which is assumed to be nonempty. Let $\wh{\mathcal{M}}$ be the set of local modes of $\wh r_*.$ For any two subset $A,B\subset\mathbb{R}^d$, their Hausdorff distance is defined as
\begin{align*}
d_H(A,B) = \max\Big\{\sup_{a\in A}\inf_{b\in B} \|a-b\|, \sup_{b\in B}\inf_{a\in A}\|a-b\|\Big\}.
\end{align*}
To study $d_H(\mathcal{M}, \wh{\mathcal{M}})$, we will use the following perturbation result for the set of local modes.

\begin{lemma}\label{modelemma}
Let $\mathcal{R}$ be a compact subset of $\mathbb{R}^d$ with positive volume, and $\partial\mathcal{R}$ be its boundary. Let $p: \mathcal{U}\rightarrow\mathbb{R}$ be a three times continuously differentiable Morse function, where $\mathcal{U}\supset\mathcal{R}$ is an open subset of $\mathbb{R}^d$. Let $\lambda_1(x)$ be the largest eigenvalue of $\nabla^2 p(x)$, $x\in\mathcal{R}$. Let $\mathcal{M}$ be a set of local modes of $p$ on $\mathcal{R}$, that is, $\mathcal{M} = \{x\in \mathcal{R}: \nabla p(x) = 0, \lambda_1(x)<0)\}$, and $\mathcal{C}= \{x\in \mathcal{R}: \nabla p(x) = 0 \}$ be the set of all critical points of $p$ on $\mathcal{R}$. Assume that $\eta:=\inf_{x\in \mathcal{C}}d(x,\partial \mathcal{R})>0$. Let $\wt p:\mathcal{U} \rightarrow\mathbb{R}$ be a twice differentiable function, and $\wt{\mathcal{M}}$ be the set of local modes of $\wt p$ on $\mathcal{R}$. There exists a constant $c_0>0$ such that if $\sup_{x\in\mathcal{R}} \max_{|\alpha|\leq 2}|\partial^\alpha p(x) - \partial^\alpha\wt p(x)|<c_0$, then $\wt p$ has the same number of local modes as $p$ on $\mathcal{R}$, and $$d_H(\mathcal{M} ,\wt{\mathcal{M}}) \leq \frac{4}{\lambda_*}\max_{x\in\mathcal{M}}\|\nabla \wt p(x) - \nabla p(x)\|,$$
where $\lambda_*:=-\inf_{x\in\mathcal{M}}\lambda_1(x)>0$.
\end{lemma}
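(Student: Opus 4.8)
The plan is to run the classical perturbation argument for nondegenerate critical points: localize near each critical point of $p$, use a Newton-type contraction to count and locate the critical points of $\wt p$, and read off the displacement bound from the same contraction. First I would record the consequences of $p$ being a Morse function on the compact set $\mathcal{R}$. Its critical set $\mathcal{C}$ is finite, say $\mathcal{C}=\{y_1,\dots,y_N\}$ with $\mathcal{M}=\{x_1,\dots,x_m\}\subseteq\mathcal{C}$ the modes, and by the hypothesis $\eta=\inf_{x\in\mathcal{C}}d(x,\partial\mathcal{R})>0$ each $y_i$ lies in the interior of $\mathcal{R}$. I would fix a radius $\rho\in(0,\eta)$ small enough that the closed balls $\overline{B(y_i,\rho)}$ are pairwise disjoint and contained in $\mathcal{R}$, and (using continuity of $\nabla^2 p$, which is controlled by the $C^3$ bound on $p$) small enough that on each $\overline{B(y_i,\rho)}$ the Hessian $\nabla^2 p$ stays nonsingular with its index unchanged; in particular on every mode ball $\overline{B(x_j,\rho)}$ the largest eigenvalue of $\nabla^2 p$ stays $\le -\lambda_*/2<0$. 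On the complementary compact set $\mathcal{R}\setminus\bigcup_i B(y_i,\rho)$, which contains no critical point, $\|\nabla p\|$ attains a positive minimum $g_0>0$.

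Next I would choose the threshold $c_0$ and carry out the counting. Assuming $\sup_{x\in\mathcal{R}}\max_{|\alpha|\le 2}|\partial^\alpha p(x)-\partial^\alpha\wt p(x)|<c_0$, the gradient perturbation satisfies $\|\nabla\wt p-\nabla p\|\le\sqrt d\,c_0$ and, by Weyl's inequality, the eigenvalues of $\nabla^2\wt p$ differ from those of $\nabla^2 p$ by at most $\sqrt d\,c_0$. For $c_0<g_0/\sqrt d$ we get $\|\nabla\wt p\|\ge g_0-\sqrt d\,c_0>0$ on the complement, so $\wt p$ has no critical point outside the balls. Inside each ball $\overline{B(y_i,\rho)}$ I would set $A_i=\nabla^2 p(y_i)$ and study the Newton map $T_i(x)=x-A_i^{-1}\nabla\wt p(x)$, whose fixed points are exactly the zeros of $\nabla\wt p$. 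Since $DT_i(x)=A_i^{-1}\big(\nabla^2 p(y_i)-\nabla^2\wt p(x)\big)$ and $\|A_i^{-1}\|$ is bounded (nondegeneracy), choosing $\rho$ and $c_0$ small makes $\|DT_i\|\le 1/2$ on the ball and $T_i(\overline{B(y_i,\rho)})\subseteq\overline{B(y_i,\rho)}$; the contraction principle then yields exactly one critical point $\wt y_i\in B(y_i,\rho)$. Weyl's inequality applied at $\wt y_i$ shows $\nabla^2\wt p(\wt y_i)$ has the same index as $\nabla^2 p(y_i)$, so modes correspond to modes and saddles to saddles. This gives $|\wt{\mathcal{M}}|=|\mathcal{M}|=m$ together with a bijection $x_j\leftrightarrow\wt x_j\in B(x_j,\rho)$.

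For the quantitative estimate I would use the contraction bound $\|\wt x_j-x_j\|\le\frac{1}{1-q}\|T_j(x_j)-x_j\|$ with $q\le 1/2$, where $T_j(x_j)-x_j=-[\nabla^2 p(x_j)]^{-1}\nabla\wt p(x_j)$ and $\nabla\wt p(x_j)=\nabla\wt p(x_j)-\nabla p(x_j)$ because $x_j$ is a critical point of $p$. Since $\|[\nabla^2 p(x_j)]^{-1}\|$ equals the reciprocal of $-\lambda_1(x_j)$, taking the worst case over the finitely many modes gives $\|\wt x_j-x_j\|\le\frac{2}{\lambda_*}\max_{x\in\mathcal{M}}\|\nabla\wt p(x)-\nabla p(x)\|$. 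Because the bijection $x_j\leftrightarrow\wt x_j$ realizes both one-sided suprema in the Hausdorff distance, this upgrades to $d_H(\mathcal{M},\wt{\mathcal{M}})\le\frac{4}{\lambda_*}\max_{x\in\mathcal{M}}\|\nabla\wt p(x)-\nabla p(x)\|$, the remaining factor absorbing the passage from $-\lambda_1(x_j)$ to the uniform curvature bound on the balls.

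I expect the main obstacle to be the counting step, namely establishing in a genuinely quantitative way that $\wt p$ has exactly one critical point in each small ball, of the correct index, and none outside. The Newton-contraction route above is the cleanest since it delivers existence, uniqueness, and the displacement bound at once, but it forces care in choosing $\rho$ and $c_0$ simultaneously (the contraction radius must be compatible with both the Hessian variation across the ball, governed by the third derivatives of $p$, and the gradient lower bound $g_0$ on the complement); a topological-degree argument via the nonvanishing homotopy $t\nabla\wt p+(1-t)\nabla p$ on each sphere $\partial B(y_i,\rho)$ is an equivalent alternative. A secondary care point is the sign bookkeeping ensuring that the constant entering the denominator is a valid lower bound on $-\lambda_1$ over all modes, i.e. that $\lambda_*$ indeed controls the flattest mode rather than the sharpest.
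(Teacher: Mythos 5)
Your proposal is correct, but it reaches the conclusion by a genuinely different route than the paper. The paper's proof is organized around a value-comparison and curvature argument: it fixes a radius $\kappa$ proportional to $\lambda_\dagger/(dc_p)$, shows via Weyl's inequality and a Taylor expansion that $p$ drops by at least $\tfrac14\lambda_*\kappa^2$ from a mode $x$ to the sphere $\partial \mathcal{B}_x^\kappa$, and deduces existence of a local mode of $\wt p$ inside each ball from the $C^0$ closeness condition alone ($\wt\delta_0<\tfrac18\lambda_*\kappa^2$); uniqueness comes from a separate concavity argument ($\wt\lambda_1\leq-\lambda_*/4$ on the ball forces $(\wt x_1-\wt x_2)^T[\nabla\wt p(\wt x_1)-\nabla\wt p(\wt x_2)]<0$ for two distinct critical points), exclusion elsewhere from an eigenvalue sign argument near the non-mode critical points and a gradient lower bound on the remaining compact set, and the displacement bound from a third, separate Taylor expansion of $\nabla\wt p$ around $\wt x$ giving $\|\nabla p(x)-\nabla\wt p(x)\|\geq\tfrac14\lambda_*\|\wt x-x\|$. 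Your Newton--Kantorovich contraction $T_i(x)=x-A_i^{-1}\nabla\wt p(x)$ replaces the existence, uniqueness, and displacement steps by a single mechanism, treats all critical points (not just modes) uniformly with index preservation via Weyl at the perturbed point, and yields the quantitative bound as the standard a priori estimate $\|\wt x_j-x_j\|\leq(1-q)^{-1}\|A_j^{-1}[\nabla\wt p(x_j)-\nabla p(x_j)]\|$; the cost is that existence already requires $C^1$ closeness and an invertible-Hessian constant $\|A_i^{-1}\|$, whereas the paper's existence step needs only $C^0$ closeness. Your exclusion argument outside the balls and your simultaneous choice of $\rho$ and $c_0$ match the paper's Steps 2a--2b in substance. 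The sign issue you flag is real and is present in the paper as well: for the inequalities $\lambda_1(x)\leq-\lambda_*$ (paper's Step 1) and $\|\nabla^2p(x)v\|\geq\lambda_*\|v\|$ (paper's Step 3) to hold at every mode, $\lambda_*$ must be a lower bound on $-\lambda_1$ over $\mathcal{M}$, i.e.\ the definition should read $\lambda_*=-\sup_{x\in\mathcal{M}}\lambda_1(x)$ rather than $-\inf_{x\in\mathcal{M}}\lambda_1(x)$; your bookkeeping, which bounds $\|A_j^{-1}\|$ by the reciprocal of the flattest mode's curvature, is consistent with the corrected constant.
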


\begin{remark}
\emph{The Hausdorff distance between the sets of modes of the true and estimated functions is also studied in Theorem 1 of Chen et al. (2016). As a comparison, our result is given under weaker conditions. In particular, we do not require their assumption (M2), which assumes that there exist $\eta_1>0$ and $C_3>0$ such that $\{x:\|\nabla p(x)\|\leq\eta_1, 0>-\lambda_*/2\geq \lambda_1(x)\}\subset \mathcal{M}^{\lambda_*/(2dC_3)}$.}
\end{remark}

The following theorem gives a non-asymptotic bound for $d_H(\mathcal{M}, \wh{\mathcal{M}})$, as a direct consequence of Lemma~\ref{morse}, Theorem~\ref{derbound}, and Lemma~\ref{modelemma}.

\begin{theorem}\label{modebound1}
For $\xi$ in {\bf T1}, under assumptions {\bf A1}-{\bf A3}, {\bf K}, {\bf E}, and {\bf T}, if $r$ is a Morse function and there are no critical points on the boundary of $\mathcal{X}$, then there exist constants $C>0$, $c>0$, and $h_0>0$ such that for all $n\geq 1$, $0<h\leq h_0$, $\tau>1$ satisfying $nh^{d+4}\geq c(\tau \vee |\log h|)$ we have with probability at least $1 - 2(d+2)^2e^{-\tau}$ that, $\mathcal{M}$ and $\wh{\mathcal{M}}$ have the same cardinality, and $d_H(\mathcal{M},\wh{\mathcal{M}}) \leq C(\sqrt{\tau \vee |\log h|} \gamma_{n,h}^{(1)} +h^2).$
\end{theorem}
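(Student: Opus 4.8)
The plan is to assemble the three prior results in the order Lemma~\ref{morse} $\to$ Theorem~\ref{derbound} $\to$ Lemma~\ref{modelemma}, using $\wt r$ as the bridge between the population object $r$ and the estimator $\wh r_*$. First I would invoke Lemma~\ref{morse} to replace $r$ by $\wt r$ without changing the target set. Under {\bf E} and {\bf T} one has $\wt r=\Psi\circ r$ with $\Psi(u)=\mathbb{E}[\xi(u+\epsilon)]$ strictly increasing and, by {\bf T}, three times continuously differentiable, so $\wt r$ and $r$ share their critical points and indices; in particular the set of local modes of $\wt r$ equals $\mathcal{M}$, $\wt r$ is again Morse, and since $r$ has no critical point on $\partial\mathcal{X}$ neither does $\wt r$, giving the separation $\eta>0$ required by Lemma~\ref{modelemma}. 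This lets me apply Lemma~\ref{modelemma} with $p=\wt r$ and $\wt p=\wh r_*$ on $\mathcal{R}=\mathcal{X}$ (taking $\mathcal{U}$ an open set with $\mathcal{X}\subset\mathcal{U}\subset\mathcal{X}^\delta$, on which $\wt r$ is three times continuously differentiable by {\bf A3} and {\bf T}).

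Next I would turn Theorem~\ref{derbound}, which controls a single $\partial^\alpha$, into simultaneous control over all $|\alpha|\le 2$ by a union bound over the $N=(d+1)(d+2)/2$ multi-indices with $|\alpha|\le 2$. On an event of probability at least $1-3Ne^{-\tau}\ge 1-2(d+2)^2e^{-\tau}$ one then has, for every such $\alpha$,
\[
\sup_{x\in\mathcal{X}}|\partial^\alpha\wh r_*(x)-\partial^\alpha\wt r(x)|\le C\sqrt{\tau\vee|\log h|}\,\gamma_{n,h}^{(|\alpha|)}+Ch^{(3-|\alpha|)\wedge 2}.
\]
On this event I would verify the hypothesis $\sup_{x}\max_{|\alpha|\le 2}|\partial^\alpha\wt r-\partial^\alpha\wh r_*|<c_0$ of Lemma~\ref{modelemma}. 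The binding term is the Hessian error ($|\alpha|=2$), bounded by $C\sqrt{\tau\vee|\log h|}\,\gamma_{n,h}^{(2)}+Ch$; the condition $nh^{d+4}\ge c(\tau\vee|\log h|)$ makes $\sqrt{\tau\vee|\log h|}\,\gamma_{n,h}^{(2)}=\sqrt{(\tau\vee|\log h|)/(nh^{d+4})}$ as small as we like by enlarging $c$, while $h\le h_0$ makes $Ch$ small, so the threshold $c_0$ is met. This is exactly why the sample-size requirement is stated with $nh^{d+4}$ rather than the $nh^d$ of Theorem~\ref{derbound}, and it is the one genuinely non-formal step: one must confirm that \emph{second-derivative} accuracy (not merely gradient accuracy) is what guarantees that $\wh r_*$ and $\wt r$ have matching Morse structure and hence equal mode counts.

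Finally, Lemma~\ref{modelemma} delivers that $\wh r_*$ has the same number of local modes as $\wt r$, so $|\wh{\mathcal{M}}|=|\mathcal{M}|$, together with
\[
d_H(\mathcal{M},\wh{\mathcal{M}})\le \frac{4}{\lambda_*}\max_{x\in\mathcal{M}}\|\nabla\wh r_*(x)-\nabla\wt r(x)\|,
\]
where $\lambda_*=-\inf_{x\in\mathcal{M}}\lambda_1^{\wt r}(x)$ is a positive constant; at a mode $\nabla^2\wt r=\Psi'(r)\nabla^2 r$ with $\Psi'(r)>0$ bounded below on the compact range of $r$, so $\lambda_*$ is a fixed positive multiple of the eigenvalue gap of $\nabla^2 r$. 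Bounding the right-hand side by the $|\alpha|=1$ estimate, $\max_{x\in\mathcal{M}}\|\nabla\wh r_*-\nabla\wt r\|\le C\sqrt{\tau\vee|\log h|}\,\gamma_{n,h}^{(1)}+Ch^2$, and absorbing $4/\lambda_*$ and the dimensional factors into $C$, I obtain $d_H(\mathcal{M},\wh{\mathcal{M}})\le C(\sqrt{\tau\vee|\log h|}\,\gamma_{n,h}^{(1)}+h^2)$ on the same event.

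The whole argument is a deterministic implication on the event supplied by Theorem~\ref{derbound}, so beyond the union bound no additional probabilistic work is needed; the only points demanding care are the transfer of the Morse/boundary structure from $r$ to $\wt r$ via Lemma~\ref{morse} and the verification of the $c_0$-threshold through the Hessian error, which together account for the $nh^{d+4}$ scaling and the $2(d+2)^2$ factor in the probability.
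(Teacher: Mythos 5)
Your proposal is correct and follows exactly the route the paper intends: the paper's own proof of this theorem is a one-line remark that it is a direct consequence of Lemma~\ref{morse}, Theorem~\ref{derbound}, and Lemma~\ref{modelemma}, and your write-up supplies precisely the missing details (the union bound over the $(d+1)(d+2)/2$ multi-indices giving $3(d+1)(d+2)/2\le 2(d+2)^2$, the verification of the $c_0$-threshold via the $|\alpha|=2$ term under $nh^{d+4}\ge c(\tau\vee|\log h|)$, and the transfer of the Morse structure and boundary separation from $r$ to $\wt r$). No gaps.
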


\begin{remark}\label{optimalrates}$\;$\\
\emph{a). When $h=O(n^{-\frac{1}{d+6}})$, it is straightforward to show that $d_H(\mathcal{M},\wh{\mathcal{M}}) = O( n^{-\frac{2}{d+6}}\sqrt{\log n} )$ almost surely, by applying the Borel-Cantelli Lemma to the above result with $\tau=2\log n.$ This matches the minimax rate of convergence up to $\sqrt{\log n}$ of mode estimation for density functions, as given in Theorem 3 of Tsybakov (1990) with the smoothness parameter $\beta=3$ therein. The minimax rate of mode estimation for regression functions under a similar smoothness assumption is unknown to our best knowledge, and it is expected to be the same as that for density functions. As a side note, in the case of a unique mode, the mode estimator using the k-NN regression, which is studied in Jiang (2019), matches the minimax rate in Tsybakov (1990) with $\beta=2$, when $k$ is appropriately chosen.\\
b). It can be seen from the proof that the constants in this theorem in fact do not depend on the magnitude (e.g. variance) of noise $\epsilon_i$. This is unlike the case for the estimation of regression function itself, because we utilize a bounded transformation $\xi$ and the property in Lemma~\ref{morse}.}
\end{remark}

%
%

\subsection{Transformation 2}

Next we consider the transformation $\xi$ in {\bf T2}.  We will replace assumption {\bf E} by the following assumption in our analysis.

Assumption {\bf E$^\prime$}: There exists a constant $B\in(0,\infty)$ such that $|Y|\leq B$ almost surely.

Under assumption {\bf E$^\prime$}, we have $0\leq \pi(Y_{[n]}) \leq B + c_0$ almost surely. We can write $\wh r_*(x) = \wh r(x) + \pi(Y_{[n]}) \wh t(x)$, where 
\begin{align}\label{wht}
\wh t(x) = \frac{1}{nh^d} \sum_{i=1}^n \frac{K_h(x-X_i)}{\wh f(X_i)},
\end{align}
which is an estimator of unity. We consider $\wh r_*$ as an estimator of $\wb r$, which is given in (\ref{barr}). We still denote  the set of local modes of $\wh r_*$ by $\wh{\mathcal{M}}$, which can be used to estimate $\mathcal{M}$, because the set of modes of $\wb r$ is the same as that of $r$. The following result is similar to Theorem~\ref{modebound1}.
\begin{theorem}\label{modebound2}
For $\xi$ in {\bf T2}, under assumptions {\bf A1}-{\bf A3}, {\bf K}, and {\bf E$^{\prime}$}, if $r$ is a Morse function and there are no critical points on the boundary of $\mathcal{X}$, then there exist constants $C>0$, $c>0$, and $h_0>0$ such that for all $n\geq 1$, $0<h\leq h_0$, $\tau>1$ satisfying $nh^d\geq c(\tau \vee |\log h|)$ we have with probability at least $1 - 3(d+2)^2e^{-\tau}$ that, $\mathcal{M}$ and $\wh{\mathcal{M}}$ have the same cardinality, and $d_H(\mathcal{M},\wh{\mathcal{M}}) \leq C(\sqrt{\tau \vee |\log h|} \gamma_{n,h}^{(1)} +h^2).$
\end{theorem}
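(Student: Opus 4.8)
The plan is to reproduce the three-ingredient structure behind Theorem~\ref{modebound1}: a Morse correspondence between the estimation target and $r$, a uniform bound on the derivatives of $\wh r_*$ about its target up to order two, and the perturbation result Lemma~\ref{modelemma}. Here the estimation target is $\wb r = r + \pi(Y_{[n]})$, and the Morse step is immediate --- simpler than Lemma~\ref{morse} --- because $\pi(Y_{[n]})$ is constant in $x$: we have $\nabla\wb r = \nabla r$ and $\nabla^2\wb r = \nabla^2 r$ pointwise, so $\wb r$ and $r$ share the same critical points, the same indices, and the same ascending manifolds, and $\wb r$ is Morse precisely when $r$ is (the required $C^3$ smoothness of $\wb r$ is inherited from $r$ via {\bf A3}). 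In particular $\mathcal{M}$ is also the local-mode set of $\wb r$; the hypothesis that $r$ has no critical point on $\partial\mathcal{X}$ gives $\eta = \inf_{x\in\mathcal{C}} d(x,\partial\mathcal{X}) > 0$, with $\mathcal{C}$ the (finite) critical set, so Lemma~\ref{modelemma} applies on $\mathcal{R}=\mathcal{X}$ with $p=\wb r$ and $\wt p = \wh r_*$.

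The substantive step is a {\bf T2}-analog of Theorem~\ref{derbound}, i.e.\ a high-probability bound on $\sup_{x\in\mathcal{X}}\max_{|\alpha|\le2}|\partial^\alpha\wh r_*(x)-\partial^\alpha\wb r(x)|$ of order $\sqrt{\tau\vee|\log h|}\,\gamma_{n,h}^{(|\alpha|)}+h^{(3-|\alpha|)\wedge2}$. I would derive it from the decomposition recorded in the excerpt: writing $\wh r_* = \wh r + \pi(Y_{[n]})\wh t$ and $\wb r = r + \pi(Y_{[n]})$ gives $\partial^\alpha\wh r_* - \partial^\alpha\wb r = (\partial^\alpha\wh r - \partial^\alpha r) + \pi(Y_{[n]})\,\partial^\alpha(\wh t - 1)$, using $\partial^\alpha 1 = 0$ for $|\alpha|\ge1$. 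Under assumption {\bf E}$^{\prime}$ the responses obey $|Y_i|\le B$, so $\wh r$ is a Mack--M\"{u}ller estimator with bounded responses targeting $r=\mathbb{E}(Y\mid X)$, and the concentration-plus-bias analysis that proves Theorem~\ref{derbound} carries over to bound $\sup_x|\partial^\alpha\wh r - \partial^\alpha r|$; the same argument applied to the responses identically equal to $1$ bounds $\sup_x|\partial^\alpha(\wh t - 1)|$, since $\wh t$ targets $\mathbb{E}(1\mid X)=1$. The random shift enters only as the multiplier $\pi(Y_{[n]})$, which by {\bf E}$^{\prime}$ satisfies $0\le\pi(Y_{[n]})\le B+c_0$ almost surely, so it factors out of the second term and no distributional rate for it is needed. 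A union bound over the $(d+1)(d+2)/2$ partial derivatives of order at most two and over the two estimators $\wh r$ and $\wh t$ produces the stated failure probability, bounded by $3(d+2)^2e^{-\tau}$; the extra factor relative to Theorem~\ref{modebound1} is precisely the cost of the additional concentration bound for $\wh t$.

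Given this uniform bound, Lemma~\ref{modelemma} finishes the proof. The bandwidth and small-$h$ hypotheses are used to force the order-two deviation below the fixed threshold $c_0$ of Lemma~\ref{modelemma} --- the binding term being $\sqrt{\tau\vee|\log h|}\,\gamma_{n,h}^{(2)}=\sqrt{(\tau\vee|\log h|)/(nh^{d+4})}$, exactly as in Theorem~\ref{modebound1} --- which then guarantees that $\wh r_*$ has the same number of local modes as $\wb r$, hence as $r$, so $\mathcal{M}$ and $\wh{\mathcal{M}}$ have equal cardinality. For the distance, Lemma~\ref{modelemma} gives $d_H(\mathcal{M},\wh{\mathcal{M}})\le(4/\lambda_*)\max_{x\in\mathcal{M}}\|\nabla\wh r_*(x)-\nabla\wb r(x)\|$, and the $|\alpha|=1$ instance of the uniform bound controls the right-hand side by $C(\sqrt{\tau\vee|\log h|}\,\gamma_{n,h}^{(1)}+h^2)$, the claimed rate.

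I expect the main obstacle to lie in the uniform derivative bound of the second paragraph, specifically in controlling the data-dependent denominators $\wh f(X_i)$ uniformly over $x$ (they sit inside both $\wh r$ and $\wh t$) and in making the concentration simultaneous over $\mathcal{X}$ and over all derivatives up to order two; this demands the empirical-process tools (Talagrand-type deviation inequalities together with a chaining argument) already assembled for Theorem~\ref{derbound}. The one genuinely new feature, the common random shift $\pi(Y_{[n]})$, is not in fact an obstacle: the decomposition isolates it as a bounded multiplier of $\wh t$, so only its almost-sure boundedness under {\bf E}$^{\prime}$ --- and no rate of convergence --- is required.
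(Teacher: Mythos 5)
Your proposal follows the paper's own proof essentially step for step: the same decomposition $\wh r_* - \wb r = (\wh r - r) + \pi(Y_{[n]})(\wh t - 1)$, the same reuse of the Theorem~\ref{derbound} machinery for $\wh r$ (with responses bounded by {\bf E}$^{\prime}$) and separately for the unity estimator $\wh t$, the same observation that $\pi(Y_{[n]})$ need only be almost surely bounded, and the same final appeal to Lemma~\ref{modelemma}. The only cosmetic difference is that the paper routes the analysis of $\wh t$ through an explicit intermediate $\wh t_0$ mirroring Propositions 5.2--5.4, which you summarize rather than restate; the argument is correct.
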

%

\section{Bandwidth Selection}\label{sec:bwselection}

The selection of bandwidth is critical for the finite-sample performance of kernel type estimators. In particular, the bandwidth $h$ determines the number of modes and clusters using the regression mean shift. 
Since our mean shift algorithm can be understood as tracking the discretized gradient integral curves of the estimated regression function, a bandwidth producing good estimators of the gradient of the regression function is expected to be suitable for our regression MS algorithm. This can also be seen from Lemma~\ref{modelemma}. Based on this observation, below we give a bandwidth selection strategy for our regression MS algorithm using a cross validation idea. 

Let $\nabla \wh r_\dagger(x)$ be a nonparametric kernel estimator of the gradient $\nabla \wt r(x)$. For example, $\nabla \wh r_\dagger(x)$ can be the gradient of the NW regression estimator $\wh r_{\text{NW}}(x)$, or the gradient component using the local linear (LL) regression estimator. For $j=1,\cdots,n$, let $\nabla \wh r_{*,(-j)}(x)$ be the gradient estimator as given in (\ref{grdexp}), but using the sample points excluding $(X_j,\wt Y_j)$. The leave-one-out cross-validation error is defined as
\begin{align}
\text{CV}(h) = \frac{1}{n} \sum_{j=1}^n \| \nabla \wh r_\dagger(X_j) - \nabla \wh r_{*,(-j)}(X_j) \|^2.
\end{align}
The least square cross validation bandwidth $h_{\text{LSCV}}$ which minimizes $\text{CV}(h)$ is proposed to be used for our regression mean shift algorithm. Note that $\nabla \wh r_\dagger$ itself requires a bandwidth choice. For LL estimator, one can use the gradient-based method as given in Henderson et al. (2015). For NW gradient estimator, one can scale the optimal bandwidth for NW regression estimator by multiplying a factor $n^{\frac{1}{(d+4)(d+6)}}$. See Chapter 5.5 of Handerson and Parmeter (2015). We adopt the second method in our simulation study. 

\section{Simulations and Applications}\label{sec:numerical}

\subsection{Simulation studies}
We ran simulations to show the effectiveness of our regression mean shift algorithm in partitioning the sample points in the input space and identifying the local modes of a regression function. We considered the model $Y_i=r(X_i)+\epsilon_i$ as in (\ref{regressionmodel}), where $r$ is a bivariate bimodal function. Specifically, $r(x)=f_1(x)+f_2(x)$, where $f_1$ and $f_2$ are the density functions of $\mathscr{N}(\mu_1,\Sigma_1)$ and $\mathscr{N}(\mu_2,\Sigma_2)$, respectively, with $\mu_1=(1,1)^T$, $\Sigma_1=\text{diag}(0.5,0.5)$, $\mu_2=(-1,-1)^T$, $\Sigma_2=\text{diag}(0.3,0.9)$; for $i=1,\cdots,n,$ $\epsilon_i\overset{\text{i.i.d.}}{\sim} \mathscr{N}(0,0.01)$, and $X_i$ is i.i.d. truncated bivariate normal such that $X_1 {\sim} \mathscr{N}(\mu_3,\Sigma_3)$, with $\mu_3=(0,0)^T$ and $\Sigma_3=\text{diag}(1.5,1.5)$ conditional on $X_1\in[-2,2]^2$. In each run $n=200$ data points were generated from the above model as the input of our algorithm and we repeated the procedure for 200 times. We used the Epanechnikov kernel for $g$, and $\xi(x)=1/(1+\exp(-10x))+0.01$ for transformation {\bf T1} and $c_0=0.1$ for transformation {\bf T2}. Figure~\ref{fig:boxplots} shows the boxplots of the number of modes detected by the algorithm using grid points of bandwidth values, overlapped with the relative frequency of correct bimodal identification (red curves). Using the bandwidth selection strategy in Section~\ref{sec:bwselection}, among the 200 replications the relative frequencies that algorithm can correctly find two modes are 78\% for {\bf T1} and 81\% for {\bf T2}, respectively, which are comparable to the peak values in Figure~\ref{fig:boxplots}. When we increased the sample size $n$ to 500, the relative frequencies of correct number of modes reach 91\% and 93.5\%, respectively. These numbers are as high as 94.5\% and 97.5\% when $n=1000$. 

\begin{figure}[h]
\begin{center}
\includegraphics[width=16cm]{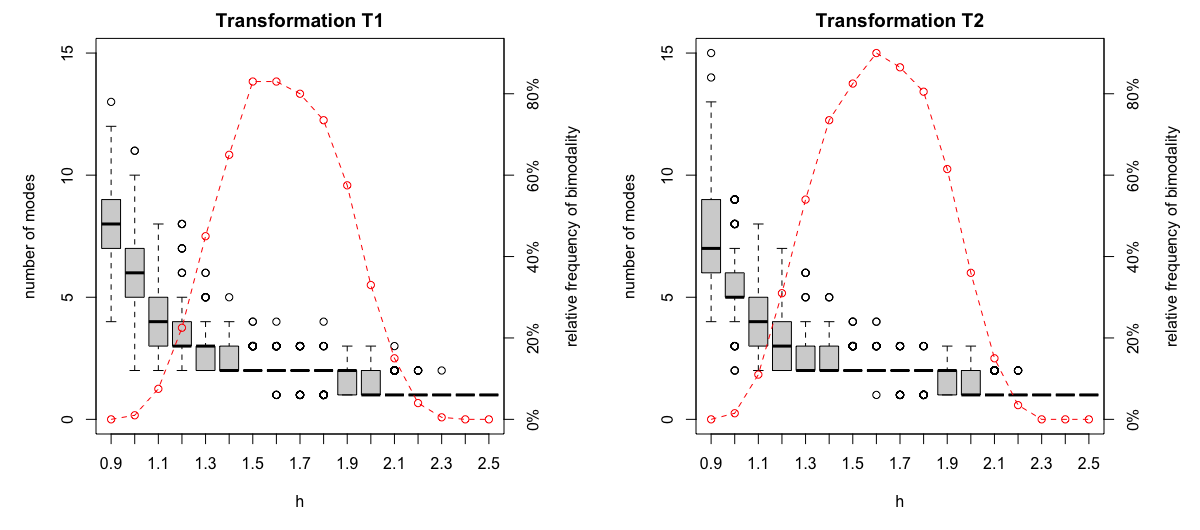}
\end{center}
\caption{Boxplots of the numbers of modes/clusters across different bandwidths for transformations T1 (left), and T2 (right) when $n=200$ for 200 samples, with the overlapping red curves representing the relative frequency of detecting two modes.}
\label{fig:boxplots}
\end{figure}

In Figure~\ref{fig:simulation} we visualize the outcome of the algorithm using a representative random sample of size 200 based on {\bf T1}, which shows the paths of the estimation sequence in our regression MS algorithm, as well as the impact of different bandwidths on the mode estimation results. Not surprisingly, when the bandwidth is small, there tend to be more local modes (or basins), which can also be seen from the boxplots in Figure~\ref{fig:boxplots}. It can be seen that the bandwidth selection strategy given in Section~\ref{sec:bwselection} works well on this sample.

\begin{figure}[h]
\begin{center}
\includegraphics[width=12cm]{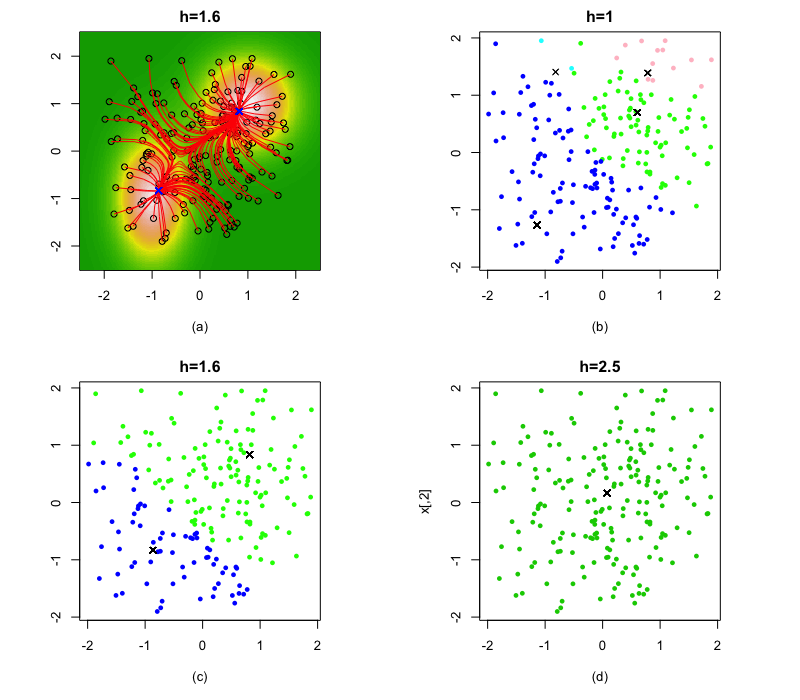}
\end{center}
\caption{Panel (a): The regression function is represented by the color in the background. Black dots are the sample points. Red curves are lines connecting sequential points generated from the algorithm in (\ref{gms}) starting from the sample points. The two $\times$ symbols are the points of convergence. Using the method in Section~\ref{sec:bwselection}, the selected bandwidth is 1.6. Panels (b) - (d) shows the effect of the bandwidth choice. The sample points are partitioned (shown by different colors) according to their points of convergence (represented by $\times$). There are 4 basins when $h=1$, 2 basins when $h=1.6$, and 1 basin when $h=2.5$. }
\label{fig:simulation}
\end{figure}

\subsection{Examples of applications}

\subsubsection{Partitions of protein energy landscapes}

The proposed algorithm can be useful to obtain deep insight about the structure-function relationship in biological molecules (biomolecules). In the application highlighted here, we focus on protein molecules, which are ubiquitous in the cell, and where the three-dimensional structures accessed at equilibrium (under physiological conditions) often regulate a rich set of activities. Figure~\ref{fig:BioApplication} relates the results obtained when the proposed algorithm is employed to organize the three-dimensional structures of the human H-Ras protein by their potential energies. 

The structures (data sets) of the human H-Ras protein are obtained via the biophysical methodology described in Maximova et al. (2017) and Maximova et al. (2018). This work obtains structures for different versions of human H-Ras, the naturally-occurring, also referred to as the wildtype (WT) version, and mutated versions, known as variants. In the WT version, the protein accesses groups of structures that regulate its activity between an ``on'' and ``off'' state; in the on state, H-Ras instigates cellular reactions that signal the cell to grow; in the off state, such signals stop. In mutated variants, which are found in many disorders, the regulation is disrupted in some manner, but only a view of the space of structures accessible can reveal exactly what, at the structure level, is responsible for dysfunction. The proposed algorithm promises to reveal exactly such organization of structures, as we show here.

From the structures/datasets produced by work in Maximova et al. (2017) and Maximova et al. (2018) for the WT and a common oncogenic variant, Q61L (the naming indicates the position where the naturally-occurring amino acid, ``Q'', has been replaced with a different amino acid, ``L'' in this case), we randomly selected $2000$ structures for each, WT and Q61L\footnote{The data sets are downloadable at \url{https://dx.doi.org/10.21227/331n-7019}}. Each structure (data point) comes with an associated energy value, which sums the physical interactions among the atoms in a particular structure. These energy values (in the original data sets) are all negative, and we used their absolute values in the analysis, so that our regression mean shift algorithm can cluster data points based on the local minima (antimodes) that they converge to.

Each panel in Figure~\ref{fig:BioApplication} organizes the samples by their energies. Each dot corresponds to a three-dimensional structure. The red-to-blue color-coding scheme indicates high-to-low energies. The left panel shows the WT form/variant of the human H-Ras protein; the right panel shows the oncogenic variant known as Q61L.

\begin{figure}[h]
\centering
\begin{tabular}{cc}
\includegraphics[width=0.45\columnwidth]{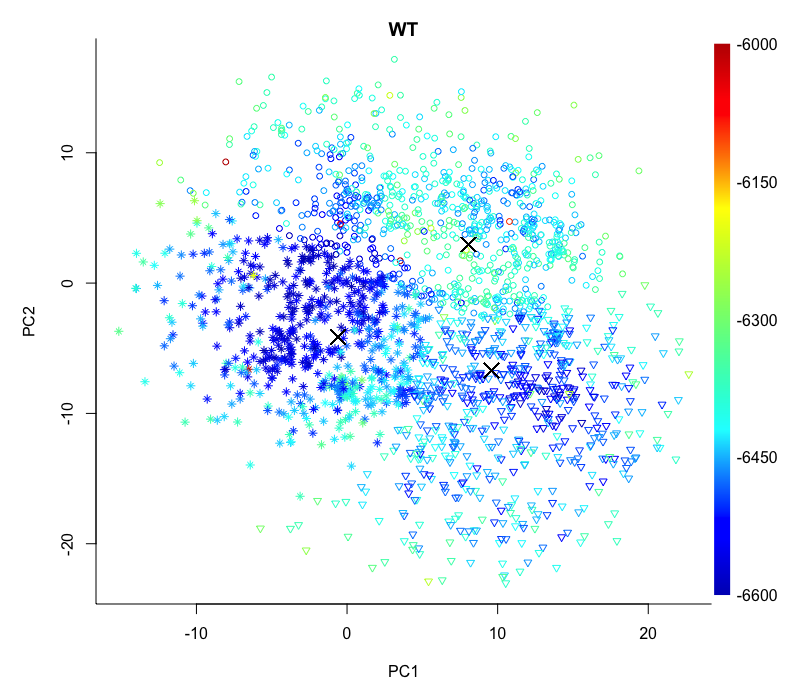} & \hspace*{-3mm}
\includegraphics[width=0.45\columnwidth]{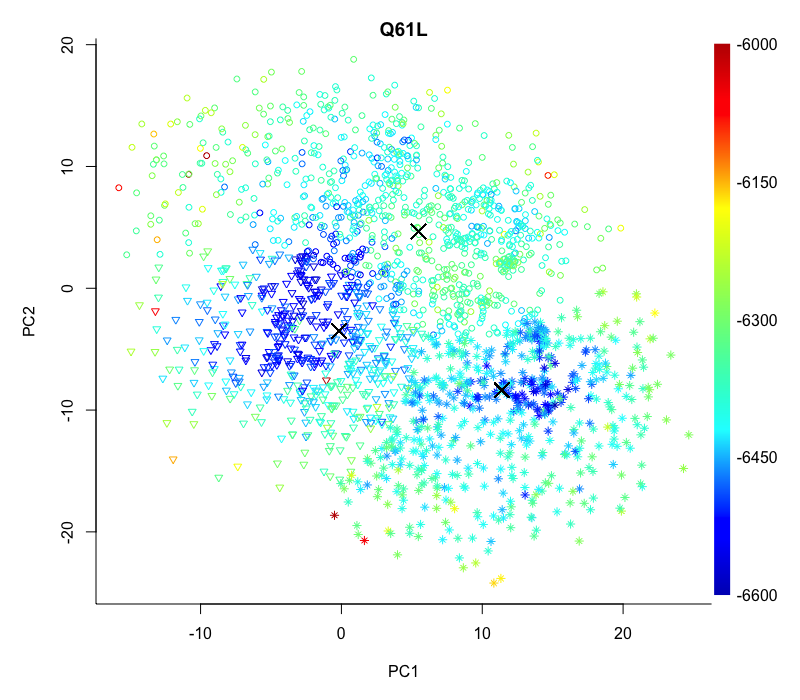}\\[-3mm]
\end{tabular}
\caption{Each panel shows 2000 structures (as symbols \ding{83}, $\triangledown$, and $\circ$) of the human H-Ras protein, and each type of these symbols represents a cluster (basin) obtained with the proposed algorithm, for which bandwidths are selected using the gradient-based LSCV method proposed in Section~\ref{sec:bwselection}. The symbols $\times$ are the local modes. To aid visualization, structures are embedded onto the top two principal components obtained via Principal Component Analysis. Symbols are color-coded by their potential energies in a red-to-blue color scheme showing high-to-low energies. The left panel organizes structures accessed under physiological conditions by WT H-Ras; the right panel does so for the mutated, oncogenic Q61L H-Ras.} 
\label{fig:BioApplication}
\end{figure}

The proposed algorithm is used to group the structures accessed by each H-Ras variant into local antimodes (to which we refer as energy basins). Basins with many low-energy structures (blue dots in Figure~\ref{fig:BioApplication}) correspond to stable and semi-stable structural states. The left panel in Figure~\ref{fig:BioApplication} shows two such basins, one on the top left and one on the bottom right. More low-energy structures are contained in the basin shown in the top left, which indicates this is a wider and so more stable basin. Blue dots are found in between the basins, which indicate that the protein can transition between the two basins via low-energy structures; that is, an energetically feasible pathways exists to regulate the transition between the basins. Knowledge of the transition between on and off states for WT H-Ras allows us to speculate that the basins correspond to such states, as revealed by the proposed algorithm.   

A comparison with the right panel in Figure~\ref{fig:BioApplication}, which shows the organization for Q61L H-Ras, shows two major differences with WT H-Ras. First, both basins become narrower; that is, they contain fewer low-energy structures. This suggests that the mutation impacts the structural plasticity of H-Ras. Second, few to no low-energy structures can be found between the basins, which suggests that the energetic pathway between the basins becomes more energetically costly. This in turn suggests that the Q61L mutation directly impacts the transition between the on and off states and so rigifies H-Ras. Such information is precious, as it allows formulating a detailed structure-based hypothesis that links sequence mutations to dysfunction via changes to structure and structure dynamics.

In addition to the two main basins, a third one is evident in the left panel of Figure~\ref{fig:BioApplication} for the WT H-Ras. This contains fewer structures and is shallower. The right panel of Figure~\ref{fig:BioApplication} indicates that the basin becomes even shallower in Q61L H-Ras. These results are in great agreement with early work in Clausen et al. (2015), where a Conf1 basin was suggested to exist in WT H-RAS and correspond to an unanticipated structural state. Specifically, by analyzing crystallographic structures whose projections over PC1 and PC2 fell on this basin, work in Clausen et al. (2015) suggested that this smaller and shallower basin corresponded to a structural state that was an intermediate between the known on and off states between the GTP- and GDP-bound states of WT H-Ras. In strong agreement with the results presented here, work in Clausen et al. (2015) additionally reported that this basin all but disappeared in Q61L.

More broadly, the shown application suggests that by organizing an energy landscape into the major local antimodes, the proposed algorithm allows understanding in great detail the impact of a mutation on the structural basin-to-basin dynamics that characterizes flexible biomolecules, such as proteins, and even obtaining an explanation for dysfunction in terms of changes to the underlying energy landscape and the dynamics on it.

\subsubsection{Spatial clustering of malaria episodes}

We applied our regression mean shift algorithm to a malaria episodes dataset available in the \proglang{R} package \pkg{SPODT} (Gaudart et al, 2015) and obtained a spatial clustering result, as shown in Figure~\ref{fig:Malaria}. The dataset contains 168 observations, each corresponding to the longitudinal and latitudinal coordinates of a household, and the mean value of the number of malaria episodes per child in the household in Bandiagara, Mali, from November to December 2009. Our algorithm returns three clusters using the automatically selected bandwidth. The estimated modes represent high-risk locations and different clusters are separated by low-risk valleys. 

As a comparison, we also show the partitioning result of the CART algorithm (Breiman et al., 1993) in Figure~\ref{fig:Malaria}. The same dataset has also been analyzed using a variant of CART algorithm called spatial oblique decision tree (SpODT). See Gaudart et al. (2015). The shape of clusters found using our regression MS algorithm appear different from that obtained from CART and its variant, which reflects the fundamental difference in the ideas of partitioning: the mathematical models behind the clusters in our regression MS are the ascending manifolds of the regression functions (see Section~\ref{sec:partition}), while CART and its variants can be viewed as piecewise constant approximation of the regression function through their leaf nodes.

\begin{figure}[h]
\centering
\includegraphics[width=0.75\columnwidth]{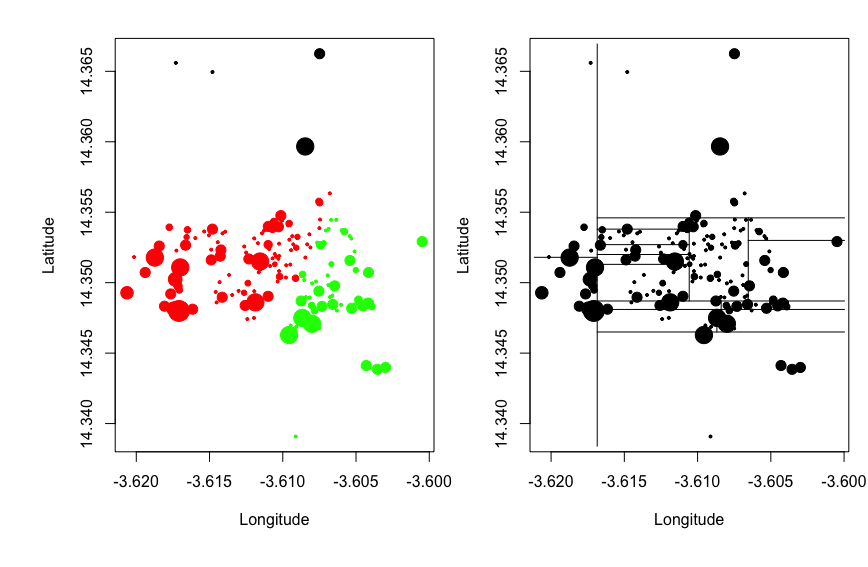} 
\caption{The graphs shows the spatial partitioning results for the malaria episodes dataset available in the \proglang{R} package \pkg{SPODT}, using our regression mean shift algorithm (left panel), and CART (right panel). The mean value of the malaria episodes at each location is represented by the size of the dots. The bandwidth in regression MS algorithm is selected using the method as given in Section~\ref{sec:bwselection}, and the three clusters are represented by different colors (black, red, and green).}
\label{fig:Malaria}
\end{figure}

\section{Discussions}\label{sec:discussion}
In this paper we develop a regression mean shift algorithm to partition the input space and estimate the local modes of the regression function. The algorithm is shown to be convergent and we give the non-asymptotic rates of convergence for the local mode estimators. A bandwidth selection strategy is discussed and is shown to be effective in simulations and real data applications.

The idea of using regression MS to find local modes can be extended to extract ridges of regression functions. Ridges are low-dimensional geometric features where the function values are local maximum in a subspace, which generalizes the concepts of local modes and can be used to model filamentary structures. An algorithm called Subspace Constrained Mean Shift (SCMS) was developed in Ozertem and Erdogmus (2011) to extract ridges of KDEs. Some theoretical analysis of this algorithm can be found in Genovese et al. (2014) and Qiao and Polonik (2016). It is straightforward to convert our regression mean shift to its subspace constrained version as follows. 
%
Let $\wh v_{1,*}(x),\cdots,\wh v_{d,*}(x)$ be the orthonormal eigenvectors of $\nabla^2 \wh r_*(x)$ associated with eigenvalues $\wh \lambda_{1,*}(x)\geq\cdots\geq\wh \lambda_{d,*}(x)$. For a fixed $s\in\{2,\cdots,d\}$, denote $\wh V_*(x) = [\wh v_{s,*}(x),\cdots,\wh v_{d,*}(x)]$. For any starting point $z_0\in\mathcal{X}$, define $z_1,z_2,\cdots,$ iteratively by
\begin{align}\label{gms}
z_{j+1} = \wh V_*(z_j)\wh V_*(z_j)^\top \wh m_*(z_j) + z_j, \; j=1,2,\cdots.
\end{align}
The limit of the sequence is considered as an estimate of a $s$-ridge point of the regression function. We leave the analysis of this SCMS algorithm as a future work.

\section{Proofs}\label{sec:proofs}
This section contain the proofs of theoretical results in Sections~\ref{sec:algorithm} and \ref{sec:theory}. Note that the proof of Theorem 2.1 is very similar to that of Theorem 1 in Ghassabeh (2015) and is hence omitted. In the proofs we use $C$ to denote a constant that may change its value depending on where it appears.

\subsection{Proof of Lemma~\ref{gradlemma}}
\begin{proof}
Using the expression in (\ref{MMest}), we have
\begin{align*}
\wh r_*(z_{j+1}) - \wh r_*(z_{j}) 
= \frac{c_{k,d}}{nh^{d+2}} \sum_{i=1}^n \frac{\wt Y_i}{\wh f(X_i)} \Big[k\Big(\|\frac{z_{j+1}-X_i}{h}\|^2\Big)- k\Big(\|\frac{z_{j}-X_i}{h}\|^2\Big)\Big].
\end{align*}
The convexity assumption of $k$ implies that $k(x_2)-k(x_1)\geq g(x_1)(x_1-x_2)$ for all $x_1,x_2\in[0,\infty)$ and $x_1\neq x_2$. Then using (\ref{wixdef}) we have
\begin{align*}
&\wh r_*(z_{j+1}) - \wh r_*(z_{j}) \nonumber \\
\geq & 
%
\frac{c_{k,d}}{nh^{d+2}}\sum_{i=1}^n \wt Y_i w_i(z_j) [2(z_{j+1}-z_j)^TX_i + \|z_{j}\|^2 -\|z_{j+1}\|^2 ] \nonumber\\
=& \frac{c_{k,d}}{nh^{d+2}} \Big[2(z_{j+1} - z_j)^T \sum_{i=1}^n \wt Y_i w_i(z_j) X_i + (\|z_{j}\|^2 - \|z_{j+1}\|^2)\sum_{i=1}^n \wt Y_i w_i(z_j) \Big] \nonumber \\
= &  \frac{c_{k,d}}{nh^{d+2}} \|z_{j+1} - z_j\|^2 \sum_{i=1}^n \wt Y_i w_i(z_j) \nonumber \\
\geq & \frac{c_{k,d}}{nh^{d+2}} \|z_{j+1} - z_j\|^2 \inf_{z\in\mathcal{C}} \sum_{i=1}^n \wt Y_i w_i(z),
\end{align*}
where $\mathcal{C}$ is the convex hull of $\{X_1,\cdots,X_n\}$. Notice that $\inf_{z\in\mathcal{C}} \sum_{i=1}^n \wt Y_i w_i(z) >0$, which implies that $\wh r_*(z_{j+1}) - \wh r_*(z_{j})>0$ as long as $z_{j+1}\neq z_{j}$. Since $\wh r$ is upper bounded, the sequence $\wh r(z_{j})$ converges, and it follows that $\|z_{j+1} - z_{j}\|\rightarrow 0$ as $j\rightarrow\infty$. Since $\wh m_*(z_j)= z_{j+1} - z_j$, using (\ref{grdexp}) we then get $\nabla\wh r_*(z_j)\rightarrow0.$
\end{proof}

\subsection{Proof of Lemma~\ref{morse}}
\begin{proof}
Let $\xi^\prime$ and $\xi^{\prime\prime}$ be the first two derivatives of $\xi$, respectively, and define
\begin{align}
\rho_1(x) = \mathbb{E}  \xi^\prime(r(x) +\epsilon_1), \text{ and }
\rho_2(x) = \mathbb{E}  \xi^{\prime\prime}(r(x) +\epsilon_1).
\end{align}
We have that $\rho_1(x)>0$, and both $\rho_1(x)$ and $\rho_2(x)$ are bounded. Note that for $\alpha\in\mathbb{N}^d$ with $|\alpha| =1$,
\begin{align}\label{wtr1}
\partial^\alpha \wt r(x) = \rho_1(x)\partial^\alpha r(x) ,
\end{align}
and for $\alpha\in\mathbb{N}^d$ with $|\alpha| =2$ such that $\alpha = \alpha_1 + \alpha_2$, where $|\alpha_1| = |\alpha_2| =1,$
\begin{align}
\partial^\alpha \wt r(x) =\rho_1(x)\partial^\alpha r(x) + \rho_2(x)\partial^{\alpha_1} r(x)\partial^{\alpha_2} r(x).
\end{align}
In the matrix form, we get 
\begin{align*}
&\nabla \wt r(x) = \rho_1(x) \nabla r(x),\\
&\nabla^2 \wt r(x) = \rho_1(x) \nabla^2 r(x) + \rho_2(x) \nabla r(x) [\nabla r(x)]^T.
\end{align*}
Hence $\nabla^2 \wt r(x) = \rho_1(x) \nabla^2 r(x)$ for all $x$ such that $\nabla \wt r(x)=0$. Since $\rho_1(x)>0$, the critical points of $\wt r(x)$ and $r(x)$ are the same with the same indices. 

When $r$ is a Morse function, the above analysis implies that $\wt r$ is also a Morse function. To show that the ascending manifolds of $r$ and $\wt r$ are the same, we will prove that the trajectories of integral curves driven by $\nabla r$ and $\nabla \wt r$ are the same when the starting points are the same. To this end, we show that there exists a reparameterization function $\eta:\mathbb{R}_{\geq0}\rightarrow\mathbb{R}_{\geq0}$ such that $\wt\phi_x(t)=\phi_x(\eta(t)),t\geq0$, where $\phi_x$ and $\wt\phi_x$ are integral curves driven by $\nabla r$ and $\nabla \wt r=\rho_1\nabla r$, respectively, defined as the solutions of
\begin{align*}
&\phi_x^\prime(t) = \nabla r(\phi_x(t)), \;t\geq 0;\; \phi_x(0) = x;\\
&\wt\phi_x^\prime(t) = \nabla \wt r(\wt \phi_x(t)),\; t\geq 0;\; \wt\phi_x(0) = x.
\end{align*}
Here $\eta$ is the solution of the ODE $\eta^\prime (t) = \rho_1(\phi_x(\eta(t)));\; \eta(0)=0$. Then we have $\wt\phi_x(0)=\phi_x(\eta(0))=x$ and for $t\geq 0,$
\begin{align*}
(\phi_x\circ \eta)^\prime(t)= \phi_x^\prime(\eta(t)) \; \eta^\prime(t) = \nabla r(\phi_x\circ\eta(t)) \, \rho_1( \phi_x\circ \eta(t)) =  \nabla \wt r(\phi_x\circ \eta (t)).
\end{align*}
Hence $\wt\phi_x(t)=\phi_x(\eta(t)),t\geq0$. So the conclusion of this lemma follows.
\end{proof}

\subsection{Proof of Theorem~\ref{derbound}}

 We use empirical process theory in the proofs. Let $\mathbb{P}$ be the probability measure of $(X,Y)$, and $\mathbb{P}_n$ be the empirical probability measure with respect to $\{(X_i,Y_i): i=1,\cdots,n\}$ such that we write $\mathbb{P}(g)=\mathbb{E}g(X,Y)$, and $\mathbb{P}_n(g) = \frac{1}{n}\sum_{i=1}^n g(X_i,Y_i)$, for any measurable function $g: \mathbb{R}^d \times\mathbb{R} \rightarrow\mathbb{R}$. Let 
\begin{align}
\mathbb{G}_n(g) = \frac{1}{\sqrt{n}} \sum_{i=1}^n [\mathbb{P}_n(g) - \mathbb{P}(g) ].
\end{align}

Let $\mathcal{G}$ be a set of measurable functions from $\mathbb{R}^{d+1}$ to $\mathbb{R}$. $\mathcal{G}$ is called a uniformly bounded VC-class if there exists a constant $B> 0$ such that $\sup_{x\in\mathbb{R}^{d+1}}|g(x)|\leq B$ for all $g\in \mathcal{G}$, and there exist positive numbers $A$ and $\nu$ (called the characteristics) such that for every $0<\epsilon\leq B$, 
\begin{align*}
\sup_Q \mathcal{N}(\mathcal{G}, L_2(Q),\epsilon) \leq \Big( \frac{AB}{\epsilon}\Big)^\nu,
\end{align*}
where the covering number $\mathcal{N}(\mathcal{G}, L_2(Q),\epsilon)$ denotes the smallest number of $L_2(Q)$-balls of radius at most $\epsilon$ needed to cover $\mathcal{G}$ and the supremum is taken over all probability measures $Q$ on $\mathbb{R}^{d+1}$. 

The following proposition generalizes Proposition A.5 in Sriperumbudur and Steinwart (2012), based on Talagrand's inequality. See Gin\'{e} and Guillou (2002) and Einmahl and Mason (2000). Its proof is given in the appendix.
\begin{proposition}\label{talagrand}
Let $M$ is a real-valued function on $\mathbb{R}^d$ with bounded support $\mathcal{S}$ such that $M \in L_\infty(\mathbb{R}^d)\cap L_2(\mathbb{R}^d)$. Suppose that the marginal density $f$ of $X$ is uniformly bounded on $\mathcal{X}^{\eta h_0}$ for some constant $h_0>0$, where $\eta=\sup_{x\in \mathcal{S}}\|x\|$. Suppose that
\begin{align}\label{fclass}
\mathscr{F}:=\{\mathbb{R}^d\times\mathbb{R} \ni(u,v) \mapsto M(x-u) : \; x\in \mathbb{R}^d\}
\end{align}
is a uniformly bounded VC-class with characteristics $(V,\nu)$. For $h>0$, let $\zeta_h:\mathcal{X}\times\mathbb{R} \rightarrow\mathbb{R}$ a function indexed by $h$ such that $\sup_{0<h\leq h_0}\sup_{x,y\in\mathcal{X}\times \mathbb{R}}|\zeta_h(x,y)|\leq L$ for some constant $L\in(0,\infty)$.  For $(u,v)\in\mathcal{X}\times \mathbb{R}$ and $h>0$, denote $g_{x,h}(u,v)=\frac{1}{h^d}\zeta_h(u,v)M((x-u)/h)$ and $\mathcal{G}_h = \{g_{x,h}(\cdot): x\in\mathcal{X}\}$. Then, there exists a positive constant $C$ only depending on $L$, $M$, $f$, $A$ and $\nu$ such that, for all $n \geq 1$, $0<h < h_0$, and $\tau>0$ we have
\begin{align*}
\mathbb{P}^n\Big(\frac{1}{\sqrt{n}}\sup_{g\in \mathcal{G}_h } |\mathbb{G}_n(g)| \leq \frac{C}{nh^d}\log\frac{C}{h} + \sqrt{\frac{C}{nh^d}\log\frac{C}{h}} + \frac{\tau C}{nh^d} + \frac{C\sqrt{\tau}}{\sqrt{nh^d}}\Big) \geq 1 - e^{-\tau}.
\end{align*}
\end{proposition}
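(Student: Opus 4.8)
The plan is to view $\frac{1}{\sqrt n}\sup_{g\in\mathcal{G}_h}|\mathbb{G}_n(g)|$ as the normalized supremum of an empirical process and to bound it with two classical tools applied in sequence: a moment bound for the expected supremum over a bounded VC class, followed by Talagrand's concentration inequality. Since $\frac{1}{\sqrt n}\mathbb{G}_n(g)=(\mathbb{P}_n-\mathbb{P})(g)=\frac1n\sum_{i=1}^n\big(g(X_i,Y_i)-\mathbb{P}(g)\big)$, it suffices to bound $Z:=\sup_{g\in\mathcal{G}_h}\big|\sum_{i=1}^n(g(X_i,Y_i)-\mathbb{P}(g))\big|$ and divide by $n$ at the end. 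I expect the first two terms of the stated bound to come from $\mathbb{E}Z$ and the last two from the sub-Gaussian and sub-exponential deviations of $Z$ about $\mathbb{E}Z$.

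First I would check that $\mathcal{G}_h$ is a uniformly bounded VC class whose characteristics do not deteriorate as $h\to 0$. Because $\mathscr{F}$ in (\ref{fclass}) is VC with characteristics $(V,\nu)$, and for each fixed $h$ the class $\mathcal{G}_h$ is obtained from $\mathscr{F}$ by composing with the fixed dilation $u\mapsto u/h$, multiplying by the fixed bounded multiplier $\zeta_h$, and multiplying by the scalar $h^{-d}$, all of which are VC-preserving operations, $\mathcal{G}_h$ is VC with index $\nu$ and a constant $A$ depending only on the original characteristics; the scalar $h^{-d}$ merely rescales the envelope. Next I would record the envelope $U:=h^{-d}L\|M\|_\infty$ (from $|\zeta_h|\le L$ and $|M|\le\|M\|_\infty$) and a variance proxy: after the substitution $w=(x-u)/h$, $\mathbb{P}(g_{x,h}^2)\le h^{-d}L^2\int M(w)^2 f(x-hw)\,dw\le h^{-d}L^2\|M\|_2^2\sup_{\mathcal{X}^{\eta h_0}}f=:\sigma^2$. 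It is precisely here that the support hypothesis is used, since $M(w)\ne0$ forces $\|w\|\le\eta$ and hence $x-hw\in\mathcal{X}^{\eta h_0}$ for all $h\le h_0$, which is the set on which $f$ is assumed bounded. Thus $U\asymp h^{-d}$, $\sigma^2\asymp h^{-d}$, and the ratio $AU/\sigma\asymp h^{-d/2}$, so $\log(AU/\sigma)\le C\log(C/h)$.

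With these quantities fixed, the moment inequality for uniformly bounded VC classes (Gin\'e and Guillou (2002), Einmahl and Mason (2000)) gives $\mathbb{E}Z\le C\big(\nu U\log\frac{AU}{\sigma}+\sqrt{\nu n\sigma^2\log\frac{AU}{\sigma}}\big)$; substituting the scalings and dividing by $n$ yields exactly $\frac{C}{nh^d}\log\frac{C}{h}+\sqrt{\frac{C}{nh^d}\log\frac{C}{h}}$. Then I would apply Talagrand's inequality, which bounds $\mathbb{P}(Z\ge\mathbb{E}Z+t)$ by the sum of a sub-Gaussian term of order $\exp(-c\,t^2/(n\sigma^2))$ and a sub-exponential term of order $\exp(-c\,t/U)$. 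Choosing $t=C(\sqrt{n\sigma^2\tau}\vee U\tau)$ so that each tail is at most $\tfrac12 e^{-\tau}$ gives total probability at most $e^{-\tau}$; dividing this $t$ by $n$ and using $\sigma^2\asymp h^{-d}$, $U\asymp h^{-d}$ produces the remaining two terms $\frac{C\sqrt\tau}{\sqrt{nh^d}}$ and $\frac{\tau C}{nh^d}$. Collecting the four contributions and absorbing constants into a single $C$ yields the claim on an event of probability at least $1-e^{-\tau}$.

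The step I expect to be the main obstacle is the VC bookkeeping, because it is exactly the feature that distinguishes this statement from Proposition A.5 of Sriperumbudur and Steinwart (2012): one must verify that composing $\mathscr{F}$ with the fixed change of variables $u\mapsto u/h$ and multiplying by the $(u,v)$- and $h$-dependent bounded factor $\zeta_h$ preserves the VC-subgraph property with an index and constant that can be chosen uniformly in $h$. A secondary, more routine difficulty is tracking constants carefully enough to obtain the logarithmic factor in the form $\log(C/h)$ (which hinges on the ratio $U/\sigma\asymp h^{-d/2}$ rather than on $U$ or $\sigma$ separately) and to land on the clean tail probability $1-e^{-\tau}$ by splitting the target evenly between the two exponential regimes in Talagrand's inequality.
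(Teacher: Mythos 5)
Your proposal is correct and follows essentially the same route as the paper: establish that $\mathcal{G}_h$ is a uniformly bounded VC class with $h$-independent characteristics (dilation, multiplication by the bounded $\zeta_h$, and scalar rescaling preserve the covering-number bound), record the envelope of order $h^{-d}$ and the variance proxy $\mathbb{P}g^2\lesssim h^{-d}$ via the support condition on $M$ and the boundedness of $f$ on $\mathcal{X}^{\eta h_0}$, then combine the Gin\'e--Guillou/Einmahl--Mason moment bound for the expected supremum with Talagrand's concentration inequality. The paper packages these two tools as Theorems A.1 and A.2 of Sriperumbudur and Steinwart (2012) and carries out the covering-number bookkeeping (including the centering term, which costs an extra $+1$ in the VC exponent) explicitly, but the substance is identical to what you describe.
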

%

The proof of Theorem~\ref{derbound} needs an intermediate estimator, as define below. Denote $f_h=\mathbb{E} \wh f$. Let $$\wh r_0(x) = \frac{1}{nh^d} \sum_{i=1}^n \frac{\wt Y_iK_h(x-X_i) }{f_h(X_i)},\; x\in\mathcal{X}. $$ Note that
\begin{align}
\partial^\alpha\wh r_0(x) = \frac{1}{nh^{d+|\alpha|}}\sum_{i=1}^n \frac{\wt Y_i \partial^\alpha K((x-X_i)/h)}{f_h(X_i)}.
\end{align} 
Then for $x\in\mathcal{X}$ we can write $\partial^\alpha\wh r_*(x) - \partial^\alpha \wt r(x) =I_n(x) + II_n(x)  + III_n(x),$ where
%
\begin{align*}
&I_n(x) =  \partial^\alpha\wh r_*(x) - \partial^\alpha\wh r_0(x),\\
&II_n(x)  = \partial^\alpha\wh r_0(x) - \mathbb{E} \partial^\alpha\wh r_0(x),\\
& III_n(x) = \mathbb{E} \partial^\alpha\wh r_0(x) - \partial^\alpha \wt r(x).
\end{align*}
The conclusion in Theorem~\ref{derbound} is a direct consequence of Propositions~\ref{Inres}, \ref{IInres}, \ref{IIInres} given in the sequel, which are used to analyze $\sup_{x\in\mathcal{X}}|I_n(x)|$, $\sup_{x\in\mathcal{X}}|II_n(x)|$, and $\sup_{x\in\mathcal{X}}|III_n(x)|$, respectively. In particular, $\sup_{x\in\mathcal{X}}|I_n(x)|$, $\sup_{x\in\mathcal{X}}|II_n(x)|$ are stochastic terms that are analyzed using Proposition~\ref{talagrand}. We first consider $\sup_{x\in\mathcal{X}}|I_n(x)|$.

\begin{proposition}\label{Inres}
Under the same assumptions as in Theorem~\ref{derbound}, there exist constants $C>0$, $c>0,$ and $h_0>0$ such that for all $|\alpha|\leq 2$, $n\geq 1$, $0<h\leq h_0$, $\tau>1$ satisfying $nh^d\geq c(\tau \vee |\log h|)$ we have 
\begin{align}
\mathbb{P}^n \Big(\sup_{x\in\mathcal{X}}|\partial^\alpha\wh r_*(x) - \partial^\alpha \wh r_0(x)| < C\sqrt{\tau \vee |\log h|} \gamma_{n,h}^{(|\alpha|)} \Big) \geq 1 - 2e^{-\tau}.
\end{align}
\end{proposition}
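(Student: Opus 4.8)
The plan is to exploit that $\wh r_*$ and $\wh r_0$ differ \emph{only} through the denominators inside the sum: $\wh r_*$ uses $\wh f(X_i)$ while $\wh r_0$ uses $f_h(X_i)=\mathbb{E}\wh f(X_i)$. Since neither denominator depends on $x$, differentiation passes through the sum, so that
\begin{align*}
I_n(x) = \partial^\alpha\wh r_*(x) - \partial^\alpha\wh r_0(x) = \frac{1}{nh^{d+|\alpha|}}\sum_{i=1}^n \wt Y_i\,(\partial^\alpha K)\Big(\frac{x-X_i}{h}\Big)\,\frac{f_h(X_i)-\wh f(X_i)}{\wh f(X_i)\,f_h(X_i)}.
\end{align*}
Thus the problem reduces to controlling the KDE fluctuation $\Delta:=\sup_{z\in\mathcal{X}^{h_0}}|\wh f(z)-f_h(z)|$, to bounding the transformed responses via $|\wt Y_i|\le C_u$ (Assumption {\bf T}), and to bounding the two denominators away from $0$. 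The factor $h^{-|\alpha|}$ produced by differentiation is precisely what upgrades the rate from $\gamma_{n,h}^{(0)}$ to $\gamma_{n,h}^{(|\alpha|)}$, so no derivative of $\wh f$ is ever needed here.

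\textbf{Step 1 (KDE fluctuation).} I would apply Proposition~\ref{talagrand} with $M=K$ and $\zeta_h\equiv 1$, on a fixed compact enlargement of $\mathcal{X}$ containing $\mathcal{X}^{h_0}$; the enlargement is needed because the evaluation points in the denominators are the \emph{random} $X_i$, which lie in $\mathcal{X}^h$ whenever $(\partial^\alpha K)((x-X_i)/h)\ne0$ for some $x\in\mathcal{X}$. With this choice $\tfrac{1}{\sqrt n}\mathbb{G}_n(g_{x,h})=\wh f(x)-f_h(x)$, so the proposition bounds $\Delta$ by the sum of its four displayed terms with probability at least $1-e^{-\tau}$. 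Under $nh^d\ge c(\tau\vee|\log h|)$ and for $h$ small (so $\log(C/h)=O(|\log h|)$), each term is dominated by $\sqrt{\tau\vee|\log h|}\,(nh^d)^{-1/2}$: the two ``variance'' terms are already of this order, while the two ``bias'' terms $\tfrac{C}{nh^d}\log\tfrac{C}{h}$ and $\tfrac{\tau C}{nh^d}$ are of order $\tfrac{\tau\vee|\log h|}{nh^d}\le c^{-1/2}\sqrt{(\tau\vee|\log h|)/(nh^d)}$ since $nh^d\ge c(\tau\vee|\log h|)$. Hence $\Delta\le C\sqrt{\tau\vee|\log h|}\,\gamma_{n,h}^{(0)}$ on this event.

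\textbf{Step 2 (denominators and assembly).} Assumptions {\bf A1}--{\bf A2} give $\inf_{\mathcal{X}}f\ge\varepsilon_0$ and a bounded $\nabla f$, so for $h\le h_0$ small both $f$ and $f_h$ are bounded below by $\varepsilon_0/2$ on $\mathcal{X}^h$. Choosing $c$ large enough that $\Delta\le C/\sqrt c\le\varepsilon_0/4$ on the Step-1 event forces $\wh f(X_i)\ge f_h(X_i)-\Delta\ge\varepsilon_0/4>0$ for every contributing $X_i$. Feeding these bounds into the display for $I_n$ gives
\begin{align*}
\sup_{x\in\mathcal{X}}|I_n(x)| \le \frac{C_u\,\Delta}{(\varepsilon_0/4)(\varepsilon_0/2)}\cdot h^{-|\alpha|}\cdot\sup_{x\in\mathcal{X}}\frac{1}{nh^d}\sum_{i=1}^n\Big|(\partial^\alpha K)\Big(\frac{x-X_i}{h}\Big)\Big|.
\end{align*}
The remaining ``mass'' factor has expectation $\int|\partial^\alpha K(u)|f(x-hu)\,du\le\|f\|_\infty\|\partial^\alpha K\|_1=O(1)$ uniformly in $x$; a second application of Proposition~\ref{talagrand} with $M=|\partial^\alpha K|$ shows its fluctuation around this mean is $O(\sqrt{\tau\vee|\log h|}\,\gamma_{n,h}^{(0)})=o(1)$, so the factor is $\le C$ uniformly in $x$ with probability at least $1-e^{-\tau}$. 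Combining, $\sup_x|I_n(x)|\le C\,\Delta\,h^{-|\alpha|}=C\sqrt{\tau\vee|\log h|}\,(nh^{d+2|\alpha|})^{-1/2}=C\sqrt{\tau\vee|\log h|}\,\gamma_{n,h}^{(|\alpha|)}$, and the union of the two failure events of probability $e^{-\tau}$ each yields the claimed $1-2e^{-\tau}$.

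\textbf{Main obstacle.} The routine part is comparing the four Talagrand terms against the target rate; the delicate part is uniformity. Because the denominators are evaluated at the \emph{random} $X_i$, the lower bound on $\wh f(X_i)$ must come from a sup-norm deviation bound over the enlarged domain rather than a pointwise one, and the enlargement $\mathcal{X}^{h}\subseteq\mathcal{X}^{h_0}$ must be used consistently in Step 1 and in the mass term. One must also arrange that a \emph{single} constant $c$ simultaneously collapses the four Talagrand terms to the target rate \emph{and} drives $\Delta$ below $\varepsilon_0/4$ so the denominators stay bounded away from zero; tracking this shared threshold together with the two-event union bound is where the bookkeeping is most error-prone.
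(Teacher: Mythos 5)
Your proposal is correct and follows essentially the same route as the paper: the paper also factors the difference as $\sup_x|s_n(x)|\cdot\sup_x\wh r_+^\alpha(x)$ with $s_n=[\wh f]^{-1}-[f_h]^{-1}$, controls $\sup|\wh f-f_h|$ and the mass term $\frac{1}{nh^{d+|\alpha|}}\sum_i|\wt Y_i\partial^\alpha K_h(x-X_i)|$ by two applications of Proposition~\ref{talagrand}, lower-bounds the denominators via {\bf A1}--{\bf A2} and the constraint $nh^d\geq c(\tau\vee|\log h|)$, and takes a union bound over the two events to reach $1-2e^{-\tau}$. The only cosmetic differences are that you pull the envelope $C_u$ on $|\wt Y_i|$ outside before invoking the empirical-process bound, whereas the paper keeps $|\xi(v)|$ inside the function class via $\zeta_h$, and you are slightly more explicit about evaluating the denominator bound on the enlargement $\mathcal{X}^h$ where the contributing $X_i$ live.
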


\begin{proof}
Assume $h\leq\delta$. Using Taylor expansion and the assumption that $K$ is spherically symmetric with its support contained in the unit ball, we have
\begin{align}\label{kdebias}
&\sup_{x\in \mathcal{X}} |f_h(x) - f(x)| \nonumber\\
= & \sup_{x\in \mathcal{X}} \Big|\int K(u)f(x-hu)du - f(x)\Big| \nonumber\\
\leq & h^2  d \int K(u) \|u\|^2du \sup_{x\in \mathcal{X}^\delta} \max_{|\alpha|=2}|\partial^\alpha f(x)|.
\end{align}
Hence there exists $h_0\in(0,\delta]$ such that for all $0<h\leq h_0$,
$$\sup_{x\in \mathcal{X}} |f_h(x) - f(x)| \leq \frac{1}{3} \varepsilon_0,$$ where $0<\varepsilon_0\leq \inf_{x\in\mathcal{X}}f(x)$ is given in assumption \textbf{A1}. This implies that $\inf_{x\in \mathcal{X}} f_h(x)\geq \frac{2}{3} \varepsilon_0$. Below we always assume that $0<h\leq h_0$. Notice that 
\begin{align}\label{rstardecomp}
|\partial^\alpha\wh r_*(x) - \partial^\alpha\wh r_0(x)| \leq &\sup_{x\in \mathcal{X}} |s_n(x)| \sup_{x\in \mathcal{X}}\wh r_+^\alpha(x),
\end{align}
where $s_n(x)=[\wh f(x)]^{-1} - [f_h(x)]^{-1}$ and
\begin{align}
\wh r_+^\alpha(x) = \frac{1}{nh^{d+|\alpha|}}\sum_{i=1}^n|\wt Y_i \partial^\alpha K_h(x-X_i)|.
\end{align}
Notice that
\begin{align}
s_n(X_i) = \frac{f_h(X_i) - \wh f(X_i)}{f_h(X_i)^2} + \frac{(f_h(X_i) - \wh f(X_i))^2}{f_h(X_i)^2\wh f(X_i)}.
\end{align}
For $|\alpha|\leq 2$, consider the class of functions $\mathcal{K}_\alpha=\{\partial^\alpha K(x - \cdot): \; x\in\mathbb{R}^d\}.$ Then $\mathcal{K}_\alpha$ is uniformly bounded under assumption \textbf{K}. Note that $K(x - \cdot)=k(\|x-\cdot\|^2)$, where $k$ and its first two derivatives have bounded variation. It is known from Nolan and Pollard (1987) that in general $\mathscr{F}$ in (\ref{fclass}) is a VC-class if $M(x)=\phi(p(x))$, where $p$ is a polynomial and $\phi$ is a bounded real function of bounded variation. When $|\alpha|=0$, it is clear that $\mathcal{K}_\alpha$ is a VC-class. When $|\alpha|=1$, we have $\partial^\alpha K(x - \cdot)=k^\prime(\|x-\cdot\|^2)[2\alpha^T(x-\cdot)]$. Noice that both $\{2\alpha^T(x-\cdot): \;x\in\mathbb{R}^d\}$ and $\{k^\prime(\|x-\cdot\|^2): \;x\in\mathbb{R}^d\}$ are VC-classes. We then apply Lemma A.6 in Chernozhukov et al. (2013) to conclude that $\mathcal{K}_\alpha$ is also a VC-class. A similar argument also applies to $|\alpha|=2$.

For $u\in \mathbb{R}^d$, let $g_{x,h}(u) = \frac{1}{h^d} K((u-x)/h)$ and $\mathcal{G}_h=\{g_{x,h}(\cdot): x\in\mathcal{X}\}$. Then notice that $\sup_{x\in \mathcal{X}} |\wh f(x) - \mathbb{E}\wh f(x)| = \frac{1}{\sqrt{n}} \sup_{g\in\mathcal{G}_h}|\mathbb{G}_n(g)|$. Applying Proposition~\ref{talagrand} we get that there exists a constant $C_0>0$ such that for all $n\geq 1$, $h\in(0,1)$, and $\tau>1$ satisfying $nh^d\geq \tau$ and $nh^d\geq |\log h|$, with probability at least $1 - e^{-\tau}$,
\begin{align}\label{kdestoc}
\sup_{x\in \mathcal{X}} |\wh f(x) - f_h(x)| < C_0\sqrt{\tau \vee |\log h|} \gamma_{n,h}^{(0)}.
\end{align}
Suppose that $C_0\sqrt{\tau \vee |\log h|} \gamma_{n,h}^{(0)} < \frac{1}{3} \varepsilon_0$. On the event in (\ref{kdestoc}), we have $\sup_{x\in\mathcal{X}}|\wh f(x) - f_h(x)|< \frac{1}{3}\varepsilon_0$ and $\inf_{x\in \mathcal{X}} \wh f(x)\geq \frac{1}{3} \varepsilon_0$. Therefore $\sup_{x\in \mathcal{X}} |s_n(x)| \leq 5\varepsilon_0^{-2} \sup_{x\in \mathcal{X}} |\wh f(x) - f_h(x)|$ and with probability at least $1 - e^{-\tau}$,
\begin{align}\label{snbound}
\sup_{x\in \mathcal{X}} |s_n(x)| < 5\varepsilon_0^{-2}C_0\sqrt{\tau \vee |\log h|} \gamma_{n,h}^{(0)} .
\end{align}
Here for all $x\in\mathcal{X}$,
\begin{align}\label{rplusbound1}
0\leq & \mathbb{E} \wh r_+^\alpha(x) \nonumber\\
= & \frac{1}{h^{d+|\alpha|}}  \mathbb{E} \int_{\mathbb{R}^d} \Big|\xi(r(x)+\epsilon_1) \partial^\alpha K\Big(\frac{x-u}{h}\Big)\Big| f(u) du \nonumber\\
= & \frac{1}{h^{|\alpha|}} \mathbb{E} \int_{\mathbb{R}^d} |\xi(r(x)+\epsilon_1) \partial^\alpha K(w)| f(x-hw) dw  \nonumber\\
\leq & \frac{1}{h^{|\alpha|}} C_u \|\partial^\alpha K\|_1 \sup_{x\in \mathcal{X}^\delta} f(x) =: C_1 \frac{1}{h^{|\alpha|}},
\end{align}
where $\|\cdot\|_1$ is the $L_1$ norm.

For $(u,v)\in \mathcal{X} \times\mathbb{R}$, let $g_{x,h}^\alpha(u,v) = \frac{1}{h^d}|\xi(v)\partial^\alpha K((u-x)/h)|$. Then notice that we can write $\wh r_+^\alpha(x) - \mathbb{E}\wh r_+^\alpha(x) = \frac{1}{\sqrt{n}h^{|\alpha|}} \mathbb{G}_n(g_{x,h}^\alpha)$ and so that
\begin{align*}
\sup_{x\in\mathcal{X}}|\wh r_+^\alpha(x) - \mathbb{E}\wh r_+^\alpha(x)| = \frac{1}{\sqrt{n}h^{|\alpha|}} \sup_{g\in \mathcal{G}_{h,\alpha}}|\mathbb{G}_n(g)|,
\end{align*}
where $\mathcal{G}_{h,\alpha} =\{g_{x,h}^\alpha(\cdot): x\in \mathcal{X}\}.$ Applying Proposition~\ref{talagrand} we get that for $n\geq 1$, $h\in(0,1)$, and $\tau>1$ satisfying $nh^d\geq \tau$ and $nh^d\geq |\log h|$,
\begin{align}\label{rplusbound2}
&\mathbb{P}^n \Big(\sup_{x\in\mathcal{X}}|\wh r_+^\alpha(x) - \mathbb{E}\wh r_+^\alpha(x)| < C_2\sqrt{\tau \vee |\log h|} \gamma_{n,h}^{(|\alpha|)} \Big) \geq 1 - e^{-\tau},
\end{align}
for some constant $C_2>0$. So it follows from (\ref{rplusbound1}) and (\ref{rplusbound2}) that for $C_3=C_1\vee C_2$, 
\begin{align}\label{rplusbound3}
&\mathbb{P}^n \Big(\sup_{x\in\mathcal{X}}\wh r_+^\alpha(x)  < C_3( h^{-|\alpha|} + \sqrt{\tau \vee |\log h|} \gamma_{n,h}^{(|\alpha|)} ) \Big) \geq 1 - e^{-\tau}.
\end{align}
Combing (\ref{rstardecomp}), (\ref{snbound}) and (\ref{rplusbound3}), we then get the conclusion of this proposition.
\end{proof}

Next we consider $\sup_{x\in\mathcal{X}}|II_n(x)|$. 
\begin{proposition}\label{IInres}
Under the same assumptions as in Theorem~\ref{derbound}, there exist constants $C>0$, $c>0$, and $h_0>0$ such that for all $|\alpha|\leq 2$, $n\geq 1$, $0<h\leq h_0$, $\tau>1$ satisfying $nh^d\geq c(\tau \vee |\log h|)$ we have 
\begin{align}\label{r0bound2}
\mathbb{P}^n \Big(\sup_{x\in\mathcal{X}}|\partial^\alpha\wh r_0(x) - \mathbb{E}\partial^\alpha\wh r_0(x)| < C\sqrt{\tau \vee |\log h|} \gamma_{n,h}^{(|\alpha|)} \Big)  \geq 1 - e^{-\tau}.
\end{align}
\end{proposition}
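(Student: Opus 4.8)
The plan is to write the centered quantity $\partial^\alpha\wh r_0(x) - \mathbb{E}\partial^\alpha\wh r_0(x)$ as a single rescaled empirical process over a VC-class and then invoke Proposition~\ref{talagrand}, in direct parallel with the treatment of $\sup_{x\in\mathcal{X}}\wh r_+^\alpha(x)$ in the proof of Proposition~\ref{Inres}. Concretely, for $(u,v)\in\mathcal{X}\times\mathbb{R}$ I would set
\begin{align*}
g_{x,h}(u,v) = \frac{1}{h^d}\,\frac{\xi(v)}{f_h(u)}\,\partial^\alpha K\!\Big(\frac{x-u}{h}\Big),
\end{align*}
so that, since $f_h$ is deterministic, $\partial^\alpha\wh r_0(x) = h^{-|\alpha|}\mathbb{P}_n(g_{x,h})$ and $\mathbb{E}\partial^\alpha\wh r_0(x) = h^{-|\alpha|}\mathbb{P}(g_{x,h})$. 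Using $\mathbb{P}_n(g)-\mathbb{P}(g)=n^{-1/2}\mathbb{G}_n(g)$ this yields
\begin{align*}
\sup_{x\in\mathcal{X}}\big|\partial^\alpha\wh r_0(x) - \mathbb{E}\partial^\alpha\wh r_0(x)\big| = \frac{1}{\sqrt{n}\,h^{|\alpha|}}\sup_{g\in\mathcal{G}_{h,\alpha}}|\mathbb{G}_n(g)|,
\end{align*}
where $\mathcal{G}_{h,\alpha} = \{g_{x,h}(\cdot):x\in\mathcal{X}\}$, reducing the whole statement to controlling $n^{-1/2}\sup_{g\in\mathcal{G}_{h,\alpha}}|\mathbb{G}_n(g)|$.

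Next I would verify the hypotheses of Proposition~\ref{talagrand} with $M=\partial^\alpha K$ and $\zeta_h(u,v)=\xi(v)/f_h(u)$. The function $M=\partial^\alpha K$ has support in the unit ball and lies in $L_\infty\cap L_2$ under assumption~\textbf{K}, and the class $\mathscr{F}$ it generates is exactly the class $\mathcal{K}_\alpha$ already shown to be a uniformly bounded VC-class in the proof of Proposition~\ref{Inres} (via Nolan and Pollard (1987) and Lemma A.6 of Chernozhukov et al. (2013)). For the multiplier, assumption~\textbf{T} gives $\xi(v)\in[C_\ell,C_u]$, while the bias bound in the proof of Proposition~\ref{Inres} yields $\inf_{u\in\mathcal{X}}f_h(u)\geq \tfrac{2}{3}\varepsilon_0$ uniformly for $0<h\leq h_0$; together these give $\sup_{0<h\leq h_0}\sup_{(u,v)\in\mathcal{X}\times\mathbb{R}}|\zeta_h(u,v)|\leq \tfrac{3C_u}{2\varepsilon_0}=:L<\infty$, so all conditions of Proposition~\ref{talagrand} are met.

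Applying Proposition~\ref{talagrand} and multiplying by $h^{-|\alpha|}$ then gives, with probability at least $1-e^{-\tau}$,
\begin{align*}
\sup_{x\in\mathcal{X}}\big|\partial^\alpha\wh r_0(x) - \mathbb{E}\partial^\alpha\wh r_0(x)\big| \leq \frac{1}{h^{|\alpha|}}\Big(\frac{C}{nh^d}\log\frac{C}{h} + \sqrt{\frac{C}{nh^d}\log\frac{C}{h}} + \frac{\tau C}{nh^d} + \frac{C\sqrt{\tau}}{\sqrt{nh^d}}\Big).
\end{align*}
The closing step is to collapse this four-term bound into a single rate using the sample-size condition $nh^d\geq c(\tau\vee|\log h|)$, which forces $\tfrac{|\log h|}{nh^d}$ and $\tfrac{\tau}{nh^d}$ to be at most a constant and hence makes the two ``linear in $\tfrac{1}{nh^d}$'' terms dominated by their square-root counterparts. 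This leaves a bound of order $h^{-|\alpha|}\sqrt{\tau\vee|\log h|}\,\gamma_{n,h}^{(0)}$, and since $\gamma_{n,h}^{(|\alpha|)} = h^{-|\alpha|}\gamma_{n,h}^{(0)}$ by definition, this is precisely $C\sqrt{\tau\vee|\log h|}\,\gamma_{n,h}^{(|\alpha|)}$, as claimed.

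I do not expect a genuinely hard step: the substantive work (the VC property of $\mathcal{K}_\alpha$ and the lower bound on $f_h$) has already been done in the proof of Proposition~\ref{Inres}, so this is essentially a second, cleaner invocation of the same Talagrand machinery. The only point demanding care is the bookkeeping of powers of $h$ — tracking the $h^{-|\alpha|}$ prefactor and confirming that the $\gamma_{n,h}^{(|\alpha|)}$ normalization absorbs it — together with checking that the condition $nh^d\geq c(\tau\vee|\log h|)$ is exactly what is needed to discard the two lower-order terms in Proposition~\ref{talagrand}.
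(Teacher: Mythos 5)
Your proposal is correct and follows essentially the same route as the paper: the paper's proof also defines $\zeta_h(u,v)=\xi(v)/f_h(u)$, writes $\sup_{x\in\mathcal{X}}|\partial^\alpha\wh r_0(x)-\mathbb{E}\partial^\alpha\wh r_0(x)|$ as $\frac{1}{\sqrt{n}h^{|\alpha|}}\sup_{p}|\mathbb{G}_n(p)|$ over the class $\{h^{-d}\zeta_h(\cdot)\partial^\alpha K((x-\cdot)/h):x\in\mathcal{X}\}$, bounds $|\zeta_h|$ using the same lower bound on $f_h$ from the proof of Proposition~\ref{Inres}, and invokes Proposition~\ref{talagrand}. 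The only cosmetic difference is the value of the envelope constant for $\zeta_h$, which is immaterial.
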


\begin{proof}
For $(u,v)\in \mathcal{X} \times\mathbb{R}$, let $\zeta_h(u,v)=\frac{\xi(v)}{f_h(u)}$ and $p_{x,h}^\alpha(u,v) = \frac{1}{h^d}\zeta_h(u,v)\partial^\alpha K((x-u)/h)$. Then notice that we can write $\partial^\alpha\wh r_0(x) - \mathbb{E}\partial^\alpha\wh r_0(x) = \frac{1}{\sqrt{n}h^{|\alpha|}} \mathbb{G}_n(p_{x,h}^\alpha)$ and so that
\begin{align*}
\sup_{x\in\mathcal{X}}|\partial^\alpha\wh r_0(x) - \mathbb{E}\partial^\alpha\wh r_0(x)| = \frac{1}{\sqrt{n}h^{|\alpha|}} \sup_{p\in \mathcal{P}_{h,\alpha}}|\mathbb{G}_n(p)|.
\end{align*}
where $\mathcal{P}_{h,\alpha} =\{p_{x,h}^\alpha(\cdot): x\in \mathcal{X}\}.$ Note that using the same $h_0$ in the proof of Proposition~\ref{rstardecomp}, we have $\sup_{u\in\mathcal{X}}\sup_{v\in\mathbb{R}}|\zeta_h(u,v)| \leq 2\varepsilon_0^{-1} C_u.$ Applying Proposition~\ref{talagrand} we then get (\ref{r0bound2}).
%
\end{proof}

Next we consider $\sup_{x\in\mathcal{X}}|III_n(x)|$.
\begin{proposition}\label{IIInres}
Under the same assumptions as in Theorem~\ref{derbound}, there exist constants $C>0$ and $h_0>0$ such that for all $0<h\leq h_0$, and $|\alpha|\leq 2$ we have 
\begin{align}
\sup_{x\in\mathcal{X}} |\mathbb{E} \partial^\alpha\wh r_0(x) - \partial^\alpha \wt r(x)| \leq Ch^{(3-|\alpha|)\wedge 2}.
\end{align}
\end{proposition}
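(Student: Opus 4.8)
The plan is to compute $\mathbb{E}\,\partial^\alpha\wh r_0(x)$ in closed form and then measure its deviation from $\partial^\alpha\wt r(x)$ as the sum of an ordinary kernel smoothing bias and a correction coming from the use of $f_h$ rather than $f$ in the denominator of $\wh r_0$. First I would take the expectation term by term: since the sample is i.i.d.,
\[
\mathbb{E}\,\partial^\alpha\wh r_0(x) = \frac{1}{h^{d+|\alpha|}}\,\mathbb{E}\!\left[\frac{\wt Y_1\,(\partial^\alpha K)((x-X_1)/h)}{f_h(X_1)}\right].
\]
Conditioning on $X_1=u$ and using Assumption {\bf E} (so that $\epsilon_1$ is independent of $X_1$) gives $\mathbb{E}[\wt Y_1\mid X_1=u]=\mathbb{E}\,\xi(r(u)+\epsilon_1)=\wt r(u)$. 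Writing $\psi:=\wt r f/f_h$ and substituting $u=x-hw$, this becomes $h^{-|\alpha|}\int \psi(x-hw)(\partial^\alpha K)(w)\,dw$. Because $K$ is supported in the unit ball (Assumption {\bf K}), the integrand is compactly supported in $w$, so there are no boundary terms and I can integrate by parts $|\alpha|$ times; using $\partial_w^\alpha[\psi(x-hw)]=(-h)^{|\alpha|}(\partial^\alpha\psi)(x-hw)$ the powers of $h$ and the signs cancel, yielding the clean identity
\[
\mathbb{E}\,\partial^\alpha\wh r_0(x) = \int_{\mathbb{R}^d}(\partial^\alpha\psi)(x-hw)\,K(w)\,dw.
\]

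I would then decompose $\mathbb{E}\,\partial^\alpha\wh r_0(x)-\partial^\alpha\wt r(x)=(\mathrm{I})+(\mathrm{II})$, where $(\mathrm{I})=\int(\partial^\alpha\psi)(x-hw)K(w)\,dw-\partial^\alpha\psi(x)$ is the smoothing bias of $\partial^\alpha\psi$ and $(\mathrm{II})=\partial^\alpha\psi(x)-\partial^\alpha\wt r(x)$ is the correction term. To control $(\mathrm{I})$, note that Assumptions {\bf A2}, {\bf A3}, {\bf T}, the derivative formulas for $\wt r$ obtained in the proof of Lemma~\ref{morse}, and the fact (established as in the proof of Proposition~\ref{Inres}) that $f_h$ inherits three bounded derivatives from $f$ and stays bounded below by a positive constant on a neighborhood of $\mathcal{X}$, together imply that $\psi\in C^3$ with bounds uniform in $h$; hence $\partial^\alpha\psi\in C^{3-|\alpha|}$. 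A Taylor expansion together with $\int K=1$ and the vanishing first moment of $K$ (spherical symmetry) then bounds $(\mathrm{I})$ by $Ch^2$ when $\partial^\alpha\psi\in C^2$ (i.e.\ $|\alpha|\le1$) and by $Ch$ when $\partial^\alpha\psi$ is only $C^1$ (i.e.\ $|\alpha|=2$), so in all cases $|(\mathrm{I})|\le Ch^{(3-|\alpha|)\wedge2}$.

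For $(\mathrm{II})$ I would write $\partial^\alpha\psi-\partial^\alpha\wt r=\partial^\alpha[\wt r\,(f-f_h)/f_h]$ and expand by the Leibniz rule; each summand is a product of a bounded derivative of $\wt r$, a bounded derivative of $1/f_h$, and a factor $\partial^\gamma(f-f_h)$ with $|\gamma|\le|\alpha|$. The key estimate is $\sup_{x\in\mathcal{X}}|\partial^\gamma(f-f_h)|\le Ch^{(3-|\gamma|)\wedge2}$, which follows exactly as in (\ref{kdebias}) applied to $\partial^\gamma f\in C^{3-|\gamma|}$ (Taylor expansion plus the symmetry of $K$). The dominant contribution comes from $|\gamma|=|\alpha|$, giving $|(\mathrm{II})|\le Ch^{(3-|\alpha|)\wedge2}$; adding the two bounds yields the claim.

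The step I expect to be the main obstacle is the bookkeeping around the denominator $f_h$: it is what forces the introduction of $\psi$ and the correction term $(\mathrm{II})$, and the whole argument hinges on correctly tracking how the available smoothness order $3-|\alpha|$ caps the attainable rate at $(3-|\alpha|)\wedge2$. In particular, for $|\alpha|=2$ one cannot reach $h^2$, because $\partial^\alpha\psi$ is then only $C^1$, the first-moment cancellation from symmetry is unavailable, and only the $O(h)$ rate survives.
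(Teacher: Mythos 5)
Your argument is correct and follows essentially the same route as the paper's proof: both move the derivative from $\partial^\alpha K$ onto the smooth integrand by integration by parts (using the compact support of $K$), control the ratio $q_h=f/f_h$ and its derivatives via the kernel bias bounds on $f_h-f$, and finish with a Taylor expansion exploiting the symmetry of $K$, with the drop to $O(h)$ at $|\alpha|=2$ occurring for exactly the reason you identify. The only difference is organizational: you integrate out $\epsilon_1$ first and work with $\psi=\wt r\,q_h$ in a clean bias-plus-correction decomposition, whereas the paper Taylor-expands $\xi(r(x+hw)+\epsilon_1)\,q_h(x+hw)$ term by term before taking the expectation.
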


\begin{proof}
Let $c_f=\sup_{x\in\mathcal{X}^\delta}\sup_{|\beta|\leq 3}|\partial^\beta f(x)|$ and $c_K=[\max\{ \int_{\mathbb{R}^d} K(u) \|u\|^jdu: j=1,2\}]\vee 1$. Let $c_{\xi}=[\sup_{x\in\mathbb{R}}\max(\xi(x),\xi^\prime(x),\xi^{\prime\prime}(x),\xi^{\prime\prime\prime}(x))]\vee 1$ and $c_r=[\sup_{x\in\mathcal{X}^\delta}\sup_{|\beta|\leq 3}|\partial^\beta r(x)|]\vee1$. 

Below we take $0<h\leq \frac{1}{2}\delta$ so that $(\mathcal{X}^h)^h\subset \mathcal{X}^\delta$. For $|\alpha|=0,1$, using a Taylor expansion of order 2 and assumption {\bf K}, we have
\begin{align}\label{kdebias2}
&\sup_{x\in \mathcal{X}^h} | \partial^\alpha f_h(x) - \partial^\alpha f(x)| \nonumber\\
= & \sup_{x\in \mathcal{X}^h} |\int_{\mathbb{R}^d} K(u)\partial^\alpha f(x-hu)du - \partial^\alpha f(x)| \nonumber\\
\leq & h^2  \int_{\mathbb{R}^d} K(u) \|u\|^2du \sup_{x\in \mathcal{X}^\delta} \max_{|\beta|=|\alpha|+2}|\partial^\beta f(x)| \nonumber\\
\leq & d c_f c_K h^2.
\end{align}
Similarly for $|\alpha|=2$, using a Taylor expansion of order 1 we have
\begin{align}\label{kdebias3}
& \sup_{x\in \mathcal{X}^h} |\partial^\alpha f_h(x) - \partial^\alpha f(x)| \nonumber \\
= & \sup_{x\in \mathcal{X}^h} |\int_{\mathbb{R}^d} K(u)\partial^\alpha f(x-hu)du - \partial^\alpha f(x)| \nonumber\\
\leq & h  \int K(u) \|u\|du \sup_{x\in \mathcal{X}^\delta} \max_{|\beta|=3}|\partial^\beta f(x)|\nonumber\\
\leq &\sqrt{d} c_f c_K h.
\end{align}

Let $\eta_0=\sup\{\eta\in(0,\delta]:\; \inf_{x\in\mathcal{X}^\eta} f(x)\geq \frac{1}{2}\varepsilon_0\}$. Under assumptions {\bf A1} and {\bf A2}, we have $\eta_0>0$. Let $h_0=\min\{\eta_0, \frac{1}{2}\delta,  (\sqrt{d}c_K)^{-1}, (\frac{\varepsilon_0}{2dc_fc_K})^{1/2}\}$ and we take $h\in(0,h_0]$ below. Using (\ref{kdebias2}) and (\ref{kdebias3}) we have $\sup_{x\in \mathcal{X}^h} |f(x) - f_h(x)|\leq c_f\wedge (\frac{1}{2}\varepsilon_0)$ and $\sup_{x\in \mathcal{X}^h} |\partial^\alpha f_h(x) - \partial^\alpha f(x)|\leq c_f$ for all $|\alpha|=1,2$. This implies that for all $x\in\mathcal{X}^h$ we have $\frac{1}{2}\varepsilon_0\leq f_h(x)\leq 2c_f$, and $|\partial^\alpha f_h(x)|\leq 2c_f$ for all $|\alpha|=1,2.$
%

Let $q_h(x) = \frac{f(x)}{f_h(x)}.$ Using (\ref{kdebias2}) we have for all $x\in\mathcal{X}^h$,
\begin{align}\label{qhrate0}
|q_h(x)-1|\leq \frac{2c_fc_Kh^2}{\varepsilon_0} := C_{q,0}h^2.
\end{align}
When $|\alpha|=1$, 
\begin{align*}
\partial^\alpha q_h(x) = \frac{\partial^\alpha f(x) - \partial^\alpha f_h(x) }{f_h(x)} - \frac{[f(x) -f_h(x)] \partial^\alpha f_h(x) }{f_h(x)^2}.
\end{align*}
Hence using (\ref{kdebias2}) we have that for all $x\in\mathcal{X}^h$,
\begin{align}\label{qhrate1}
|\partial^\alpha q_h(x)| \leq \frac{2c_fc_Kh^2}{\varepsilon_0} + \frac{8c_f^2c_Kh^2}{\varepsilon_0^2} := C_{q,1}h^2.
\end{align}
When $|\alpha|=2$, suppose that $\alpha=\alpha_1+\alpha_2$, where $|\alpha_1|=|\alpha_2|=1$. We have
\begin{align}
\partial^\alpha q_h(x) = &\frac{\partial^\alpha f(x) - \partial^\alpha f_h(x) }{f_h(x)} - \frac{[\partial^{\alpha_1} f(x) -\partial^{\alpha_1} f_h(x)] \partial^{\alpha_2} f_h(x) + [\partial^{\alpha_2} f(x) -\partial^{\alpha_2} f_h(x)] \partial^{\alpha_1} f_h(x)}{[f_h(x)]^2}\nonumber\\
& + [f_h(x) - f(x)]\Big[\frac{2\partial^{\alpha_1} f_h(x) \partial^{\alpha_2} f_h(x) }{f_h(x)^4} - \frac{\partial^\alpha f_h(x)}{f_h(x)^2}\Big].\nonumber
\end{align}
Hence using (\ref{kdebias2}) and (\ref{kdebias3}) we have that for all $x\in\mathcal{X}^h$, 
\begin{align}\label{qhrate2}
|\partial^\alpha q_h(x)| &\leq \frac{2c_fc_Kh}{\varepsilon_0} + \frac{24c_f^2c_Kh^2}{\varepsilon_0^2} + \frac{128c_f^3c_Kh^2}{\varepsilon_0^4} \nonumber\\
&\leq \frac{2c_fc_Kh}{\varepsilon_0} + \frac{12c_f^2c_K\delta h}{\varepsilon_0^2} + \frac{64c_f^3c_K\delta h}{\varepsilon_0^4} := C_{q,2}h.
\end{align}
We can write for $|\alpha|=0,1,2,$
\begin{align}\label{expalphar}
&\mathbb{E} \partial^\alpha \wh r_0(x) \nonumber\\
= & \frac{1}{h^{d+|\alpha|}} \mathbb{E} \frac{\wt Y_1\partial^\alpha K(\frac{x-X_1}{h})}{f_h(X_1)} \nonumber\\
= & \frac{1}{h^{d+|\alpha|}} \mathbb{E} \frac{\xi(r(X_1)+\epsilon_1) \partial^\alpha K(\frac{x-X_1}{h})}{f_h(X_1)} \nonumber\\
=& \frac{1}{h^{d+|\alpha|}} \mathbb{E} \int_{\mathbb{R}^d} \xi(r(u)+\epsilon_1) \partial^\alpha K\Big(\frac{x-u}{h}\Big)q_h(u)du \nonumber\\
= & \frac{1}{h^{|\alpha|}} \mathbb{E}  \int_{\mathbb{R}^d} \xi(r(x-hw) +\epsilon_1) q_h(x-hw)\partial^\alpha K(w)dw \nonumber\\
= & \frac{1}{h^{|\alpha|}}\mathbb{E}  \int_{\mathbb{R}^d}  \partial_w^\alpha [\xi(r(x-hw) +\epsilon_1) q_h(x-hw) ]K(w)dw.
%
\end{align}
Here for $|\alpha|=1$,
\begin{align}\label{xiqderiv1}
\partial_w^\alpha [\xi(r(x+hw) +\epsilon_1) q_h(x+hw) ] = & h \xi^\prime(r(x+hw) +\epsilon_1) \partial^\alpha r(x+hw) q_h(x+hw) \nonumber\\
 & + h \xi(r(x+hw) +\epsilon_1) \partial^\alpha q_h(x+hw).
\end{align}
For $|\alpha|=2$, suppose that $\alpha=\alpha_1+\alpha_2$, where $|\alpha_1|=|\alpha_2|=1$. We have
\begin{align}\label{xiqderiv2}
&\partial_w^\alpha [\xi(r(x+hw) +\epsilon_1) q_h(x+hw) ] \nonumber \\
= & h^2 \xi^{\prime\prime}(r(x+hw) +\epsilon_1) \partial^{\alpha_1} r(x+hw) \partial^{\alpha_2} r(x+hw) q_h(x+hw) \nonumber \\
& + h^2 \xi^\prime(r(x+hw) +\epsilon_1) \partial^\alpha r(x+hw) q_h(x+hw) \nonumber\\
& + h^2 \xi^\prime(r(x+hw) +\epsilon_1) \partial^{\alpha_1} r(x+hw) \partial^{\alpha_2} q_h(x+hw) \nonumber\\
& + h^2 \xi^\prime(r(x+hw) +\epsilon_1) \partial^{\alpha_2} r(x+hw) \partial^{\alpha_1} q_h(x+hw) \nonumber\\
& + h^2 \xi(r(x+hw) +\epsilon_1) \partial^\alpha q_h(x+hw).
\end{align}

Using a Taylor expansion of order 2, we have 
\begin{align}\label{xitaylor0}
\sup_{x\in\mathcal{X}}|\xi(r(x+hw) +\epsilon_1) - \xi(r(x) +\epsilon_1) - h\xi^\prime(r(x) +\epsilon_1) w^T \nabla r(x)| \leq  h^2 d c_{\xi}c_r\|w\|^2.
\end{align}
and
\begin{align}\label{xitaylor1}
\sup_{x\in\mathcal{X}}|\xi^\prime(r(x+hw) +\epsilon_1) - \xi^\prime(r(x) +\epsilon_1) - h\xi^{\prime\prime}(r(x) +\epsilon_1) w^T \nabla r(x)| \leq  h^2 d c_{\xi}c_r\|w\|^2.
\end{align}
Using a Taylor expansion of order 1, we have 
\begin{align}\label{xitaylor2}
\sup_{x\in\mathcal{X}}|\xi^{\prime\prime}(r(x+hw) +\epsilon_1) - \xi^{\prime\prime}(r(x) +\epsilon_1) | \leq  h \sqrt{d} c_{\xi}c_r\|w\|.
\end{align}
For $|\alpha|=1$, using a Taylor expansion of order 2 we have 
\begin{align}\label{rtaylor1}
\sup_{x\in\mathcal{X}} |\partial^\alpha r(x+hw) - \partial^\alpha r(x) + hw^T \nabla\partial^\alpha r(x)|\leq h^2 d c_r\|w\|^2.
\end{align}
For $|\alpha|=2$, using a Taylor expansion of order 1 we have 
\begin{align}\label{rtaylor2}
\sup_{x\in\mathcal{X}} |\partial^\alpha r(x+hw) - \partial^\alpha r(x) |\leq h \sqrt{d} c_r\|w\|.
\end{align}
Therefore it follows from (\ref{expalphar}), (\ref{qhrate0}), and (\ref{xitaylor0}) that
\begin{align}\label{r0diff0}
\sup_{x\in\mathcal{X}} |\mathbb{E} \wh r_0(x) - \wt r_0(x)| \leq (C_{q,0}c_{\xi} + dc_{\xi} c_rc_K)h^2 := C_{r,0}h^2.
\end{align}
For $|\alpha|=1$, the calculations in (\ref{expalphar}), (\ref{qhrate0}), (\ref{qhrate1}), (\ref{xiqderiv1}), (\ref{xitaylor0}), (\ref{xitaylor1}) and (\ref{rtaylor1}) yield
\begin{align}\label{r0diff1}
\sup_{x\in\mathcal{X}} |\mathbb{E} \partial^\alpha \wh r_0(x) - \partial^\alpha \wt r_0(x)| \leq (C_{q,1}c_{\xi}+C_{q,0}c_{\xi} c_r + 3d c_r^2c_Kc_{\xi})h^2:= C_{r,1}h^2.
\end{align}
For $|\alpha|=2$, using \ref{expalphar}), (\ref{qhrate0}) - (\ref{qhrate2}), and (\ref{xiqderiv2}) - (\ref{rtaylor2}) we get
\begin{align}\label{r0diff2}
&\sup_{x\in\mathcal{X}} |\mathbb{E} \partial^\alpha \wh r_0(x) - \partial^\alpha \wt r_0(x)| \nonumber\\
\leq & (C_{q,2}c_{\xi} + C_{q,1}c_{\xi}c_r + 3C_{q,0}c_{\xi}c_r^2 + 4dc_{\xi}c_r^2c_K)h^2  + 2\sqrt{d} c_{\xi}c_r^3c_Kh \nonumber\\
\leq & [\delta(C_{q,2}c_{\xi} + C_{q,1}c_{\xi}c_r + 3C_{q,0}c_{\xi}c_r^2 + 4dc_{\xi}c_r^2c_K) + 2\sqrt{d} c_{\xi}c_r^3c_K] h:= C_{r,2}h.
\end{align}
The proof of this proposition is completed with a constant $C=\max\{C_{r,0}, C_{r,1}, C_{r,2}\}$.
%
\end{proof}
%

\subsection{Proofs of Lemma~\ref{modelemma} and Theorem~\ref{modebound1}}
Theorem~\ref{modebound1} is a direct consequence of the application of Lemma~\ref{modelemma} and Theorem~\ref{derbound}, so we only give the proof of Lemma~\ref{modelemma} below.


\begin{proof}
Let $c_p=\sup_{x\in\mathcal{R}}\sup_{|\beta|\leq 3}|\partial^\beta p(x)|$. Let $\lambda_\dagger= \inf_{x\in\mathcal{C}}|\lambda_1(x)|$. Since $p$ is a Morse function and $\mathcal{R}$ is a compact set, we have $\lambda_*\geq \lambda_\dagger>0$. Let $\kappa= \frac{\lambda_\dagger}{2dc_p}\wedge\eta$. Then $\mathcal{M}^{\kappa}\subset \mathcal{R}.$  Let $\mathcal{C}_\circ^\kappa=\{y\in\mathcal{R}: \inf_{x\in\mathcal{C}}\|x-y\|<\kappa\}$ be the interior of $\mathcal{C}^\kappa$, and $\mathcal{T}=\mathcal{R}\backslash\mathcal{C}_\circ^\kappa$. Let $\theta = \inf_{x\in\mathcal{T}}\max_{|\alpha|=1}|\partial^\alpha p(x)|$. Note that $\theta>0$ when $\mathcal{T}\neq\emptyset$, because $\max_{|\alpha|=1}|\partial^\alpha p|$ is a continuous function on $\mathcal{R}$ and $\mathcal{T}$ is a compact set. We will show the result in this lemma holds when the following three conditions are satisfied.
\begin{align}
&\wt\delta_0:=\sup_{x\in\mathcal{R}}|p(x) - \wt p(x)| < \frac{1}{8}\lambda_*\kappa^2, \label{condition0}\\
&\wt\delta_2:=\sup_{x\in \mathcal{R}}\max_{|\alpha|=2} |\partial^\alpha p(x) - \partial^\alpha \wt p(x)|\leq \frac{\lambda_\dagger}{4d}, \label{condition2}\\
&\wt\delta_1:= \sup_{x\in \mathcal{R}}\max_{|\alpha|=1} |\partial^\alpha p(x) - \partial^\alpha \wt p(x)|\leq \frac{1}{2} \theta, \; \text{ when } \mathcal{T}\neq\emptyset. \label{condition1}
\end{align}
%
{\bf Step 1.} For any $x\in\mathcal{M}$, consider any $y\in \mathcal{B}_x^\kappa:=\{y\in\mathcal{R}: \|x-y\|\leq \kappa\}$, and using Weyl's inequality (see page 15 of Serre, 2002) we have 
\begin{align*}
|\lambda_1(y) - \lambda_1(x)|  \leq d \sup_{|\beta| = 2}|\partial^\beta p(x) - \partial^\beta p(y)| \leq d \sup_{z\in\mathcal{R}}\sup_{|\beta| = 3}|\partial^\beta p(z)| \|x-y\| \leq dc_p\|x-y\|.
\end{align*}
Therefore for all $y\in \mathcal{B}_x^\kappa$,
\begin{align}\label{lambda1diff}
\lambda_1(y) \leq - \lambda_* + dc_p\kappa\leq -\frac{1}{2}\lambda_*.
\end{align}
In other words, $\mathcal{M}^{\kappa}\subset \mathcal{A}: =\{x\in\mathcal{R}: \lambda_1(x) \leq -\frac{1}{2}\lambda_*\}$. For all $x\in\mathcal{M}$ and all $y\in \mathcal{B}_x^\kappa$, using a Taylor expansion we have
\begin{align}
p(y) \leq p(x) + \frac{1}{2}\sup_{z\in \mathcal{M}^{\kappa}}\lambda_1(z)\|x-y\|^2\leq p(x) - \frac{1}{4}\lambda_*\|x-y\|^2.
\end{align}
Then by using (\ref{condition0}) we must have for all $x\in\mathcal{M}$ and all $y\in\mathcal{R}$ such that $\|x-y\|=\kappa$,
\begin{align}
\wt p(y) < p(y) + \frac{1}{8}\lambda_*\kappa^2 \leq p(x) - \frac{1}{8}\lambda_*\kappa^2 < \wt p(x).
\end{align}
Therefore there must exist at least one local mode of $\wt p$ on $\mathcal{B}_x^\kappa$ for each $x\in\mathcal{M}$.

{\bf Step 2a.} Let $\mathcal{S}=\mathcal{C}\backslash\mathcal{M}$ the set of critical points of $p$ on $\mathcal{R}$ excluding local modes. Suppose $\mathcal{S} \neq \emptyset$. Then following a similar calculation in (\ref{lambda1diff}), we have that for all $y\in \mathcal{S}^{\kappa}$, $\lambda_1(y) \geq \frac{1}{2}\lambda^\dagger>0.$ For any $x\in\mathcal{R}$, let $\wt\lambda_1(x)$ be the largest eigenvalue of $\nabla^2 \wt p(x)$. For all $y\in\mathcal{S}^\kappa$, by using (\ref{condition2}) and Weyl's inequality we have
\begin{align}
\wt\lambda_1(y) \geq \lambda_1(y) - d \sup_{|\beta| = 2}|\partial^\beta \wt p(y) - \partial^\beta p(y)| \geq  \frac{\lambda_\dagger}{4}.
\end{align}
So there are no local modes of $\wt p$ on $\mathcal{S}^\kappa$. The same statement is trivially true when $\mathcal{S} = \emptyset$ because $\mathcal{S}^\kappa= \emptyset$ in such a case. 

{\bf Step 2b.} If $\mathcal{T}=\emptyset,$ then we must have $\wt{\mathcal{M}}\subset \mathcal{M}^\kappa$ based on the arguments in {\bf Step 1} and {\bf Step 2a}, since $\mathcal{R}=\mathcal{M}^\kappa\cup\mathcal{S}^\kappa\cup\mathcal{T}.$ Otherwise for all $y\in \mathcal{T}$, by using (\ref{condition1}) we have that 
%
%
\begin{align}
\max_{|\alpha|=1} |\partial^\alpha \wt p(y)| \geq \theta - \wt\delta_1 \geq \frac{1}{2} \theta>0.
\end{align}
This means that there are no local modes of $\wt p$ on $\mathcal{T}$, and hence $\wt{\mathcal{M}}\subset \mathcal{M}^\kappa.$


{\bf Step 2c.} 
Suppose that there exists $x\in\mathcal{M}$ such that there are at least two different local modes $\wt x_1$ and $\wt x_2$ of $\wt p$ within $\mathcal{B}_x^\kappa$. For any $y\in\mathcal{B}_x^\kappa$, by using (\ref{lambda1diff}), (\ref{condition2}) and Weyl's inequality, we have that 
\begin{align}\label{wtlambda1bound}
\wt\lambda_1(y) \leq \lambda_1(y) + d \sup_{|\beta| = 2}|\partial^\beta \wt p(y) - \partial^\beta p(y)| \leq  -\frac{\lambda_*}{4}.
\end{align}
Using a Taylor expansion we have 
\begin{align}
0 = (\wt x_1 - \wt x_2)^T[\nabla \wt p(\wt x_1) - \nabla \wt p(\wt x_2)] \leq \sup_{y\in\mathcal{B}_x^\kappa}\wt\lambda_1(y) \|\wt x_1 - \wt x_2\|^2, 
\end{align}
which leads to a contradiction with (\ref{wtlambda1bound}). Hence there exists only one local mode $\wt x$ of $\wt p$ in $\mathcal{B}_x^\kappa$ for each $x\in\mathcal{M}.$ For the same reason, using (\ref{lambda1diff}) it can be seen that there exists only one local mode $x$ of $p$ in $\mathcal{B}_x^\kappa$. In other words, we have that the number of modes of $p$ and $\wt p$ are the same and can be matched in such a way that 
\begin{align} \label{Hausd}
d_H(\mathcal{M} ,\wt{\mathcal{M}})=\max_{x\in\mathcal{M}}\|\wt x - x\|.
\end{align}

{\bf Step 3.} Let us consider any local mode of $p$, denoted by $x$ and its corresponding local mode $\wt x$ of $\wt p$ in $\mathcal{B}_x^\kappa$. Let $|\cdot|_{\max}$ and $\|\cdot\|_{\text{op}}$be the element-wise maximum and the operator norm of a matrix, respectively. Since $\nabla p(x)=\nabla \wt p(\wt x)=0,$ using a Taylor expansion, we have
\begin{align}
\nabla p(x) - \nabla \wt p(x) = \nabla \wt p(\wt x) - \nabla \wt p(x) = [\nabla^2 p(x) + \Delta(\wt x, x)](\wt x - x),
\end{align}
where $\Delta(\wt x, x)$ is a $d\times d$ symmetric matrix such that $|\Delta(\wt x, x)|_{\max} \leq  \wt\delta_2 + c_p |\wt x - x|_{\max}$. Therefore
\begin{align*}
\|\nabla p(x) - \nabla \wt p(x)\| & \geq \|\nabla^2 p(x)(\wt x - x)\| - \|\Delta(\wt x, x)(\wt x - x)\|\\
& \geq \lambda_*\|\wt x - x\| - \|\Delta(\wt x, x)\|_{\text{op}}\|\wt x - x\|\\
& \geq \lambda_*\|\wt x - x\| - d[\wt\delta_2 + c_p |\wt x - x|_{\max}]\|\wt x - x\|\\
& \geq \frac{1}{2}\lambda_*\|\wt x - x\| - d\wt\delta_2 \|\wt x - x\|\\
& \geq \frac{1}{4}\lambda_*\|\wt x - x\|,
\end{align*}
where in the last step we use (\ref{condition2}). The conclusion of this lemma follows by noticing (\ref{Hausd}).
\end{proof}

\subsection{Proof of Theorem 3.3}
\begin{proof}
First of all, similar to Theorem 3.1, there exist constants $C_1>0$, $c_1>0$ and $h_1>0$ such that for all $|\alpha|\leq 2$, $n\geq 1$, $0<h\leq h_1$, $\tau>1$ satisfying $nh^d\geq c_1(\tau \vee |\log h|)$ we have with probability at least $1 - 3e^{-\tau}$,
\begin{align}\label{rxbound}
\sup_{x\in\mathcal{X}}|\partial^\alpha\wh r(x) - \partial^\alpha r(x)| < C_1(\sqrt{\tau \vee |\log h|} \gamma_{n,h}^{(|\alpha|)} + h^{(3-|\alpha|)\wedge 2}).
\end{align}

Recall $\wh t$ and $\wb r$ that have been defined in (3.7) and (2.8), respectively. Note that $\wh t(x)$ corresponds to $\wh r(x)$ in the case of $\wt Y_i=1$, for all $i=1,\cdots,n.$ Let $\wh t_0(x) = \frac{1}{nh^d} \sum_{i=1}^n \frac{K_h(x-X_i)}{f_h(X_i)}$. Then similar to Proposition 5.2, there exist constants $C_2>0$, $c_2>0,$ and $h_2>0$ such that for all $|\alpha|\leq 2$, $n\geq 1$, $0<h\leq h_2$, $\tau>1$ satisfying $nh^d\geq c_2(\tau \vee |\log h|)$ we have with probability at least $1 - 2e^{-\tau}$,
\begin{align}\label{tncomp1}
%
&\sup_{x\in\mathcal{X}}|\partial^\alpha\wh t(x) - \partial^\alpha \wh t_0(x)| < C_2\sqrt{\tau \vee |\log h|} \gamma_{n,h}^{(|\alpha|)}.  
%
%
\end{align}
Similar to Proposition 5.3, there exist constants $C_3>0$, $c_3>0,$ and $h_3>0$ such that for all $|\alpha|\leq 2$, $n\geq 1$, $0<h\leq h_3$, $\tau>1$ satisfying $nh^d\geq c_3(\tau \vee |\log h|)$ we have with probability at least $1 - e^{-\tau}$,
\begin{align}\label{tncomp2}
&\sup_{x\in\mathcal{X}}|\partial^\alpha \wh t_0(x) - \mathbb{E} \partial^\alpha \wh t_0(x)| < C_3\sqrt{\tau \vee |\log h|} \gamma_{n,h}^{(|\alpha|)}. 
\end{align}
Recall that $q_h=f/f_h.$ Note that 
\begin{align*}
\mathbb{E} \partial^\alpha \wh t_0(x) = \int_{\mathbb{R}^d}  \partial^\alpha q_h(x-hw) K(w)dw 
\end{align*}
Similar to Proposition 5.4, there exist constants $C_4>0$ and $h_4>0$ such that for all $0<h\leq h_4$, and $|\alpha|\leq 2$ we have 
\begin{align}\label{tnbias}
\sup_{x\in\mathcal{X}} |\mathbb{E} \partial^\alpha\wh t_0(x) - b_{|\alpha|} | \leq C_4h^{(3-|\alpha|)\wedge 2}
\end{align}
where $b_{|\alpha|}=0$ when $|\alpha|=0$ and $b_{|\alpha|}=1$ when $|\alpha|=1,2$. Hence combing (\ref{tncomp1}), (\ref{tncomp2}) and (\ref{tnbias}), for all $|\alpha|\leq 2$ we get with probability at least $1 - 3e^{-\tau}$ that
\begin{align}
%
&\sup_{x\in\mathcal{X}}|\partial^\alpha\wh t(x) -b_{|\alpha|} | < C_5(\sqrt{\tau \vee |\log h|} \gamma_{n,h}^{(|\alpha|)} +h^{(3-|\alpha|)\wedge 2}),\label{wht12}
\end{align}
where $C_5=(C_2+C_3)\vee C_4.$ Due to the almost sure boundedness of $\pi(Y_{[n]})$, using (\ref{rxbound}), (\ref{wht12}), and the relations $\wh r_*(x) = \wh r(x) + \pi(Y_{[n]}) \wh t(x)$ and $\partial^\alpha \wb r = \partial^\alpha r + b_{|\alpha|}\pi(Y_{[n]})$, for all $|\alpha|\leq 2$ we have with probability at least $1 - 6e^{-\tau}$ that
\begin{align*}
\sup_{x\in\mathcal{X}}|\partial^\alpha\wh r_*(x) - \partial^\alpha \wb r(x)| < C_6(\sqrt{\tau \vee |\log h|} \gamma_{n,h}^{(|\alpha|)} + h^{(3-|\alpha|)\wedge 2}),
\end{align*}
where $C_6=C_1+C_5(B\vee1)$, where $B$ is given in assumption {\bf E$^\prime$}. Then the conclusion of the theorem follows from the application of Lemma~\ref{modelemma}.
\end{proof}

\section{Appendix}
%


\subsection{Proof of Proposition~\ref{talagrand}}
\begin{proof}
For any measurable function $g$ and probability measure $Q$ on $\mathbb{R}^{d+1}$, let $\|g\|_{L_2(Q)}=[\int_{\mathbb{R}^{d+1}}|g(u)|^2dQ(u)]^{1/2}$ be the $L_2(Q)$-norm of $g$. We first show that $\wt{\mathcal{G}}_h:=\{h^d(g-\mathbb{P}g): g\in\mathcal{G}_h\}$ is a uniformly bounded VC-class, where $\mathbb{P}g = \mathbb{E}g(X,Y)$. For any $x\in\mathbb{R}^d$, let $m_{x,h}(u,v)=M((x-u)/h)$ for all $(u,v)\in \mathbb{R}^d\times\mathbb{R}$. Let $B\in(0,\infty)$ be a constant envelope of $\mathscr{F}$ such that $\sup_{g\in\mathscr{F}}\sup_{x\in\mathbb{R}^{d+1}}|g(x)|\leq B$. Define $\mathscr{F}_h=\{ m_{x,h}(\cdot): x\in \mathbb{R}^d\}$, and $\mathscr{F}_{h,\mathcal{X}}=\{ m_{x,h}(\cdot): x\in \mathcal{X}\}$ for all $h>0$. Using Lemma A.3 of of Sriperumbudur and Steinwart (2012), we obtain that for all $h>0$, and $\epsilon\in(0,B]$,
\begin{align}\label{fhxcovering}
\sup_Q \mathcal{N}(\mathscr{F}_{h,\mathcal{X}}, L_2(Q),\epsilon) \leq \sup_Q \mathcal{N}(\mathscr{F}_h, L_2(Q),\epsilon) = \sup_Q \mathcal{N}(\mathscr{F}, L_2(Q),\epsilon) \leq \Big( \frac{AB}{\epsilon}\Big)^\nu,
\end{align}
where the supremum is taken over all the probability measures $Q$ on $\mathbb{R}^{d+1}.$

Let $\mathscr{F}_{h,\mathcal{X}}^{(1)}=\{\mathcal{X}\times\mathbb{R}\ni(u,v)\mapsto\zeta_h(u,v)m_{x,h}(u,v):\;x\in\mathcal{X}\}$. Note that $B^{(1)}:=LB$ is a constant envelope of $\mathscr{F}_{h,\mathcal{X}}^{(1)}$ such that $\sup_{g\in\mathscr{F}_h^{(1)}}\sup_{x\in\mathbb{R}^{d+1}}|g(x)|\leq B^{(1)}$. It follows from (\ref{fhxcovering}) that, for any given probability measure $Q$ on $\mathbb{R}^{d+1}$ and any $\epsilon\in(0,B^{(1)}]$, there exist $x_1,\cdots,x_{N_\epsilon}\in\mathcal{X}$ with $N_\epsilon\leq ( \frac{AB^{(1)}}{\epsilon})^\nu$ such that $\{m_{x_{j},h}:\; j=1,\cdots,N_\epsilon\}$ is an $(\frac{\epsilon}{L})$-covering of $\mathscr{F}_{h,\mathcal{X}}$ with respect to the $L_2(Q)$-norm. In other words, for any $x\in\mathcal{X}$, there exists $j\in\{1,\cdots,N_\epsilon\}$ such that $\|m_{x,h} - m_{x_{j},h} \|_{L(Q)} \leq \frac{\epsilon}{L}.$ 
Hence for any $m_{x,h}\in\mathscr{F}_{x,\mathcal{X}}^{(1)}$,
\begin{align*}
&\|\zeta_h m_{x,h} - \zeta_hm_{x_{j},h} \|_{L(Q)} \leq L \|m_{x,h} - m_{x_{j},h} \|_{L(Q)}\leq \epsilon.
%
\end{align*}
This means that $\{\zeta_hm_{x_{j},h}:\; j=1,\cdots,N_\epsilon\}$ is an $\epsilon$-covering of $\mathscr{F}_{h,\mathcal{X}}^{(1)}$ with respect to the $L_2(Q)$-norm. Hence for any $m_{x,h}\in\mathscr{F}_{x,\mathcal{X}}$ and any $\epsilon\in(0,B^{(1)}]$,
\begin{align*}
\sup_Q \mathcal{N}(\mathscr{F}_{h,\mathcal{X}}^{(1)}, L_2(Q),\epsilon) \leq \Big( \frac{AB^{(1)}}{\epsilon}\Big)^\nu,
\end{align*}
which implies that, for any given probability measure $Q$ on $\mathbb{R}^{d+1}$ and any $\epsilon\in(0,2B^{(1)}]$, there exist $x_1^\prime,\cdots,x_{N_\epsilon}^\prime\in\mathcal{X}$ with $N_\epsilon\leq ( \frac{2AB^{(1)}}{\epsilon})^\nu$ such that $\{\zeta_h m_{x_{j}^\prime,h}:\; j=1,\cdots,N_\epsilon\}$ is a $(\frac{1}{2}\epsilon)$-covering of $\mathscr{F}_{h,\mathcal{X}}^{(1)}$ with respect to the $L_2(Q)$-norm.

Consider the interval $[-B^{(1)}, B^{(1)}]$. For any $\epsilon>0$, there exist $b_1,\cdots,b_{N_{\epsilon}}$ with $N_\epsilon\leq \lceil2B^{(1)}/\epsilon\rceil$ such that $b_1,\cdots,b_{N_{\epsilon}}$ is a $(\frac{1}{2}\epsilon)$-covering of $[-B^{(1)}, B^{(1)}]$, where $\lceil\cdot \rceil$ is the ceiling function. Let $\mathscr{F}_h^{(2)}=\{g(\cdot)-b:\; g\in \mathscr{F}_{h,\mathcal{X}}^{(1)}, |b|\leq B^{(1)}\}$. For any $g\in \mathscr{F}_{h,\mathcal{X}}^{(1)}$ and $|b|\leq B^{(1)}$, there exist $m_{x_{i}^\prime,h}$ and $b_j$ such that $\|m_{x,h} - m_{x_{i}^\prime,h} \|_{L(Q)} \leq \frac{1}{2}\epsilon$ and $|b-b_j|\leq \frac{1}{2}\epsilon$. Hence
\begin{align}
\|(m_{x,h} -b) - (m_{x_{i}^\prime,h} -b_j) \|_{L_2(Q)} \leq \|m_{x,h} - m_{x_{i}^\prime,h} \|_{L_2(Q)} + |b-b_j|  \leq \epsilon.
\end{align}
Therefore with $A^{(2)} = 2(A\vee 1)$ and $B^{(2)} = 2B^{(1)}$ we have
\begin{align*}
\sup_Q \mathcal{N}(\mathscr{F}_{h,\mathcal{X}}^{(2)}, L_2(Q),\epsilon) \leq \Big( \frac{2AB^{(1)}}{\epsilon}\Big)^\nu \lceil2B^{(1)}/\epsilon\rceil \leq \Big( \frac{A^{(2)}B^{(2)}}{\epsilon}\Big)^{\nu+1}.
\end{align*}
Note that $\sup_{g\in\mathscr{F}_h^{(2)}}\sup_{x\in\mathbb{R}^{d+1}}|g(x)|\leq B^{(2)}$. Since $\wt{\mathcal{G}}_h \subset \mathscr{F}_{h,\mathcal{X}}^{(2)}$, we have
\begin{align}
\sup_Q \mathcal{N}(\wt{\mathcal{G}}_h, L_2(Q),\epsilon) \leq \sup_Q \mathcal{N}(\mathscr{F}_{h,\mathcal{X}}^{(2)}, L_2(Q),\epsilon) \leq \Big( \frac{A^{(2)}B^{(2)}}{\epsilon}\Big)^{\nu+1}.
\end{align}
This then shows that $\wt{\mathcal{G}}_h$ is a VC class with characteristics $A^{(2)}$ and $\nu+1$, and is uniformly bounded by a constant envelope $B^{(2)}$. For any $g\in \wt{\mathcal{G}}_h$, for all $h\in(0,h_0]$, we have
\begin{align*}
\mathbb{P} g^2 = \mathbb{E}g^2(X,Y) &\leq \mathbb{E}\Big\{\Big[\zeta_h(X,Y)M\Big(\frac{x-X}{h}\Big)\Big]^2\Big\} \\
&\leq L^2 \mathbb{E}\Big\{\Big[M\Big(\frac{x-X}{h}\Big)\Big]^2\Big\} \\
& \leq h^d L^2\int_{\mathbb{R}^d} [M(w)]^2 f(x-hw) dw \\
&\leq h^d L^2 \sup_{x\in\mathcal{X}^{\eta h_0}}|f(x)| \|M\|_2^2 \\
&:= h^d \sigma_0^2.
\end{align*}
Applying Theorems A.1 and A.2 of Sriperumbudur and Steinwart (2012) we have that for all $h\in(0,h_0]$, $n\geq 1$ and $\tau>0$, with probability at least $1-e^{-\tau}$,
\begin{align*}
&\frac{1}{\sqrt{n}}\sup_{g\in \mathcal{G}_h } |\mathbb{G}_n(g)| \\
\leq & 4 \frac{1}{\sqrt{n}} \mathbb{E}\sup_{g\in \mathcal{G}_h }|\mathbb{G}_n(g)| + \sqrt{\frac{2\tau \sigma_0^2}{nh^d }} + \frac{\tau B^{(2)}}{nh^d} \\
\leq & 4C\Big[ \frac{(\nu+1) B^{(2)}}{nh^d}\log\frac{A^{(2)}B^{(2)}}{\sqrt{h^d \sigma_0^2}} + \sqrt{\frac{(\nu+1)\sigma_0^2}{nh^d}} \log\frac{A^{(2)}B^{(2)}}{\sqrt{h^d \sigma_0^2}} \Big]  + \sqrt{\frac{2\tau \sigma_0^2}{nh^d }} + \frac{\tau B^{(2)}}{nh^d},
\end{align*}
where $C$ is a universal constant that is given in Theorem A.2 of Sriperumbudur and Steinwart (2012).
\end{proof}

\section*{Acknowledgements}
This work is partially supported by grants NSF DMS 1821154 and NSF FET 1900061. This material is additionally based upon work by AS supported by (while serving at) the National Science Foundation.  Any opinion, findings, and conclusions or recommendations expressed in this material are those of the author(s) and do not necessarily reflect the views of the National Science Foundation.

\end{document}